\documentclass[journal]{IEEEtran}

\usepackage{verbatim}
\usepackage{amssymb}
\usepackage{mathtools}
\usepackage{amsthm}
\usepackage{commath}
\usepackage{nth}
\usepackage{cite}
\usepackage{cleveref}
\usepackage{caption}
\usepackage{subcaption}

\usepackage[vlined,ruled]{algorithm2e}



\usepackage{amsmath,upgreek,graphicx,epsfig,color,amsfonts}

\graphicspath{{./fig/}}

\def\prob{\mathbb{P}}
\def\expt{\mathbb{E}}
\def\real{\mathbb{R}}

\def\natural{\mathbb{N}}
\def\var{\mathbf{Var}}
\def\indicator{\mathbf{1}}
\newcommand{\until}[1]{\{1,\dots, #1\}}
\newcommand{\subscr}[2]{#1_{\textup{#2}}}
\newcommand{\supscr}[2]{#1^{\textup{#2}}}
\newcommand{\setdef}[2]{\left\{#1 \; | \; #2\right\}}
\newcommand{\seqdef}[2]{\left\{#1\right\}_{#2}}

\newcommand{\union}{\operatorname{\cup}}

\newcommand{\ceil}[1]{\left\lceil #1 \right\rceil}
\newcommand{\floor}[1]{\left\lfloor #1  \right\rfloor}

\newcommand\oprocendsymbol{\hbox{$\square$}}
\newcommand\oprocend{\relax\ifmmode\else\unskip\hfill\fi\oprocendsymbol}

\def \mc {\mathcal}

\DeclareMathOperator{\Var}{Var}
\DeclareMathOperator{\sat}{sat}
\DeclareMathOperator{\sign}{sign}

\newtheorem{theorem}{Theorem}

\newtheorem{lemma}[theorem]{Lemma}

\newtheorem{remark}{Remark}

\newtheorem{assumption}{Assumption}

\newtheorem{fact}{Fact}


%

%

%
\ifCLASSINFOpdf
\else
\fi

\begin{document}
%
\title{Nonstationary Stochastic Multiarmed Bandits: \\ UCB Policies and Minimax Regret
\thanks{This work was supported by NSF Award IIS-1734272}
}
%
%
%

\author{Lai Wei \hspace{1in} Vaibhav Srivastava
	\thanks{L. Wei and V. Srivastava are with the Department of Electrical and Computer Engineering. Michigan State University, East Lansing, MI 48823 USA.
		{\tt\small e-mail: weilai1@msu.edu; e-mail: vaibhav@egr.msu.edu }}%
}

\maketitle

\begin{abstract}
We study the nonstationary stochastic Multi-Armed Bandit (MAB) problem in which the distribution of rewards associated with each arm are assumed to be time-varying and the total variation in the expected rewards is subject to a variation budget. The regret of a policy is defined by the difference in the  expected cumulative rewards obtained using the policy and using an oracle that selects the arm with the maximum mean reward at each time. We characterize the performance of the proposed policies in terms of the worst-case regret, which is the supremum of the regret over the set of reward distribution sequences satisfying the variation budget. 
We  extend Upper-Confidence Bound (UCB)-based policies with three different approaches, namely, periodic resetting, sliding observation window and discount factor and show that they are order-optimal with respect to the minimax regret, i.e., the minimum worst-case regret achieved by any policy. We also relax the sub-Gaussian assumption on reward distributions and develop  robust versions the proposed polices that can handle heavy-tailed reward distributions and maintain their performance guarantees. 
\end{abstract}

\begin{IEEEkeywords}
Nonstationary multiarmed bandit, variation budget, minimax regret, upper-confidence bound, heavy-tailed distributions.
\end{IEEEkeywords}

%
\IEEEpeerreviewmaketitle

\section{Introduction}
%
%
%
%
\IEEEPARstart{U}{ncertainty} and nonstationarity of the environment are two of the major barriers in decision-making problems across scientific disciplines, including engineering, economics, social science, neuroscience, and ecology.
An efficient strategy in such environments requires balancing several tradeoffs, including  \emph{exploration-versus-exploitation}, i.e., choosing between the most informative and the empirically most rewarding alternatives, and \emph{remembering-versus-forgetting}, i.e., using more but possibly outdated information or using less but recent information. 


The stochastic MAB problem is a canonical formulation of the exploration-versus-exploitation tradeoff. 
In an MAB problem, an agent selects one from $K$ options at each time and receives a reward associated with it. The reward sequence at each option is assumed to be an unknown i.i.d random process.
%
The MAB formulation has been applied in many scientific and technological areas. For example,  it is used for opportunistic spectrum access in communication networks, wherein the arm models the availability of a channel~\cite{alaya2008dynamic, anandkumar2011distributed}. In MAB formulation of online learning for demand response\cite{LI2020109015,7447007}, an aggregator calls upon a subset of users (arms) who have an unknown response to the request to reduce their loads. 
MAB formulation has also been used in robotic foraging and surveillance~\cite{JRK-AK-PT:78,VS-PR-NEL:13, VS-PR-NEL:14, baykal2017persistent} and acoustic relay positioning for underwater communication~\cite{MYC-JL-FSH:13}, wherein the information gain at different sites is modeled as rewards from arms. Besides, contextual bandits are widely used in recommender systems~\cite{agarwal2009online, li2010contextual}, wherein the acceptation of a recommendation corresponds to the rewards from an arm. The stationarity assumption in classic MAB problems limits their utility in these applications since channel usage, robot working environment and people's preference are inherently uncertain and evolving. In this paper, we relax this assumption and study non-stationary stochastic MAB problems.

Robbins~\cite{robbins1952} formulated the objective of the stochastic MAB problem as minimizing the \emph{regret}, that is, the loss in expected cumulative rewards caused by failing to select the best arm every time. In their seminal work, Lai and Robbins~\cite{TLL-HR:85}, followed by Burnetas and Katehakis~\cite{burnetas1996optimal}, established a logarithm \emph{problem-dependent} asymptotic lower bound on the regret achieved by any policy, which has a leading constant determined by the underlying reward distributions.  A general method of constructing UCB rules for parametric families of reward distributions is also presented in~\cite{TLL-HR:85}, and the associated policy is shown to attain the logarithm lower bound. Several subsequent UCB-based algorithms~\cite{PA-NCB-PF:02,kl-ucb} with efficient finite time performance have been proposed.


The adversarial MAB~\cite{nonstochastic} is a paradigmatic nonstationary problem. In this model, the bounded reward sequence at each arm is arbitrary.
The performance of an policy is evaluated using the \emph{weak regret}, which is the difference in the cumulated reward of a policy compared against the best single action policy.  A $\Omega(\sqrt{KT})$ lower bound on the weak regret and a near-optimal policy Exp$3$ is also presented in~\cite{nonstochastic}. While being able to capture nonstationarity, the generality of the reward model in  adversarial MAB makes the investigation of globally optimal policies very challenging. 


The nonstationary stochastic MAB can be viewed as a compromise between stationary stochastic MAB and adversarial MAB. It maintains the stochastic nature of the reward sequence while allowing some degree of nonstationarity in reward distributions. Instead of the weak regret analyzed in adversarial MAB, a strong notion of regret
defined with respect to the best arm at each time step is studied in these problems. A broadly studied nonstationary problem is \emph{piecewise stationary} MAB, wherein the reward distributions are piecewise stationary. To deal with the remembering-versus-forgetting tradeoff, the idea of using discount factor to compute the UCB index is proposed in~\cite{kocsis2006discounted}. Garivier and Moulines~\cite{AG-EM:08} present and analyze Discounted UCB (D-UCB) and Sliding-Window UCB (SW-UCB), in which they compute the UCB using discounted sampling history and recent sampling history, respectively. 
They pointed out that if the number of change points $N_T$ is available, both algorithms can be tuned to achieve regret close to the $\Omega(\sqrt{K N_T T})$ regret lower bound. In our earlier work~\cite{LW-VS:17i}, the near optimal regret is achieved using deterministic sequencing of explore and exploit with limited memory. Other works handle the change of reward distributions in an adaptive manner by adopting change point detection techniques~\cite{hartland2007change,liu2018change,besson2019generalized,cao2019nearly,mellor2013thompson}. 


A more general nonstationary problem is studied in~\cite{Rexp3}, wherein the cumulative maximum variation in mean rewards is subject to a variation budget $V_T$. 
Additionally, the authors in~\cite{Rexp3} establish a $\Omega((KV_T)^{\frac{1}{3}}T^{\frac{2}{3}})$ minimax regret lower bound  and propose the Rexp$3$ policy. In their subsequent work~\cite{besbes2019optimal}, they tune Exp$3$.S policy from~\cite{nonstochastic} to achieve near optimal worst-case regret. Discounted Thomson Sampling (DTS)~\cite{raj2017taming} has also been shown to have good experimental performance within this general framework. However, we are not aware of any analytic regret bounds for the DTS algorithm. 


In this paper, we follow the more general nonstationary stochastic MAB formulation in~\cite{Rexp3} and design UCB-based policies that achieve efficient performance in environments with sub-Gaussian as well as heavy-tailed rewards. We focus on UCB-based policies instead of EXP$3$-type policies because EXP$3$-type policies require bounded rewards and  have large variance in cumulative rewards~\cite{nonstochastic}. Additionally, by using robust mean estimator, UCB-based policies for light-tailed rewards can be extended to handle heavy-tailed reward distributions, which exist in many domains such as social networks~\cite{albert2002statistical} and financial markets~\cite{vidyasagar2010law}. The major contributions of this work are: 

\begin{itemize}
    \item Assuming the variation density $V_T /T$ is known, we extend MOSS~\cite{MOSS} to design Resetting MOSS (R-MOSS) and Sliding-Window MOSS (SW-MOSS). Also, we show D-UCB can be tuned to solve the problem.
    
    \item With rigorous analysis, we show that R-MOSS and SW-MOSS achieve the exact order-optimal minimax regret and D-UCB achieves near-optimal worst-case regret.
    
    \item We relax the bounded or sub-Gaussian assumption on the rewards required by Rexp$3$ and SW-UCB and design policies robust to heavy-tailed rewards. We show the theoretical guarantees on the worst-case regret can be maintained by the robust policies.
\end{itemize}


The remainder of the paper is organized as follows. We formulate nonstationary stochastic MAB with variation budget in Section~\ref{sec: problem formulation} and review some preliminaries in Section~\ref{sec: minimax regret}. In Section~\ref{sec: 3 policies}, we present and analyze three UCB policies: R-MOSS, SW-MOSS and D-UCB. We present and analyze algorithms for nonstationary heavy-tailed bandit in Section~\ref{sec: heavy-tailed}.
We complement the theoretical results with numerical illustrations in Section~\ref{sec: simulation} and conclude this work in Section~\ref{sec: conclusion}.

\section{Problem Formulation} \label{sec: problem formulation}

We consider a nonstationary stochastic MAB problem with $K$ arms and a horizon length $T$. Let $\mathcal{K} := \until{K}$ be the set of arms and $\mathcal{T} :=\until{T}$ be the sequence of time slots. The reward sequence $\seqdef{X_t^k}{t\in\mathcal{T}}$ for each arm $k \in \mathcal{K}$ is composed of independent samples from potentially time-varying probability distribution function sequence $f_{\mathcal{T}}^k := \seqdef{f_t^k (x)}{t\in \mathcal{T}}$. We refer to the set $ \mathcal{F}_{ T}^{\mathcal{K}} = \setdef{f_{\mc T}^k}{k \in \mathcal{K}}$ containing reward distribution sequences at all arms as the \emph{environment}. Let $\mu_t^k = \expt [X_t^k]$. 
Then, the \emph{total variation} of $\mathcal{F}_T^{\mathcal{K}}$ is defined by
\begin{equation}\label{def: variation}
v \big(\mathcal{F}_T^{\mathcal{K}}\big) : = \sum_{t=1}^{T-1} \sup_{k \in \mathcal{K}} \: \abs{\mu_{t+1}^k-\mu_t^k},
\end{equation}
which captures the non-stationarity of the environment. We focus on the class of non-stationary environments that have the total variation within a \emph{variation budget} $V_T \geq 0$ which is defined by
\[\mathcal{E}(V_T,T,K) := \big \{\mathcal{F}_T^{\mathcal{K}} \; | \; v\big(\mathcal{F}_T^{\mathcal{K}}\big) \leq V_T \big \}.\]

At each time slot $t \in \mathcal{T}$, a decision-making agent selects an arm $\varphi_t \in \mathcal{K}$ and receives an associated random reward $ X_t^{\varphi_t}$. The objective is to maximize the expected value of the \emph{cumulative reward} $S_T :=\sum_{t=1}^{T} X_t^{\varphi_t}$. We assume that  $\varphi_t$ is selected based upon past observations $\{X_s^{\varphi_s}, \varphi_s\}_{s=1}^{t-1}$ following some policy $\rho$. Specifically, $\rho$ determines the conditional distribution
\[\prob^\rho\left(\varphi_t=k \;|\; \{X_s^{\varphi_s}, \varphi_s\}_{s=1}^{t-1} \right)\]
at each time $t \in \until{T-1}$. If $\prob^\rho\left( \cdot \right)$ takes binary values, we call $\rho$ deterministic; otherwise, it is called stochastic. 

Let the expected reward from the best arm at time $t$ be $\mu_{t}^*=\max_{k \in \mathcal{K}} \mu_t^k. $
Then, maximizing the expected cumulative reward is equivalent to minimizing the \emph{regret} defined by
\[R^\rho_T := \sum_{t=1}^T \mu_{t}^* - \expt^\rho[ S_T] = \expt^\rho \Bigg[\sum_{t=1}^T  \mu_{t}^*- \mu_t^{\varphi_t}\Bigg],\]
where the expectation is with respect to different realization of $\varphi_t$ that depends on obtained rewards through policy $\rho$. 

Note that the performance of a policy $\rho$ differs with different $\mathcal{F}_T^{\mathcal{K}} \in \mathcal{E}(V_T,T,K)$. For a fixed variation budget $V_T$ and a policy $\rho$, the \emph{worst-case regret} is the regret with respect to the worst possible choice of environment, i.e.,
\[
\subscr{R}{worst}^\rho(V_T,T,K) =\sup_{\mathcal{F}_T^{\mathcal{K}} \in \mathcal{E}(V_T,T,K)} \: R_T^\rho.\]
In this paper, we aim at designing policies to minimize the worst-case regret. The optimal worst-case regret achieved by any policy is called the \emph{minimax regret}, and is defined by 
\[ \inf_{\rho } \sup_{\mathcal{F}_T^{\mathcal{K}} \in \mathcal{E}(V_T,T,K)} \: R_T^\rho. \]
We will study the nonstationary MAB problem under the following two classes of reward distributions:
\begin{assumption}[Sub-Gaussian reward]
	For any $k\in \mathcal{K}$ and any $t\in \mathcal{T}$, distribution $f_t^k (x)$ is $1/2$ sub-Gaussian, i.e.,
	\[ \forall \lambda \in \real: \expt \left[ \exp (\lambda(X_t^k - \mu))  \right] \leq \exp \left( \frac{\lambda^2}{8}\right). \]
	Moreover, for any arm $k\in \mathcal{K}$ and any time $t\in\mathcal{T}$,  $\expt\left[X_t^k\right] \in [a,a+b]$, where $a\in\real$ and $b>0$.
\end{assumption}
\noindent

\begin{assumption}[Heavy-tailed reward]\label{ass: heavy-tailed}
	For any arm $k \in \mathcal{K}$ and any time $t \in \mathcal{T}$, $\expt \left[(X_t^k)^2\right] \leq 1$.
\end{assumption}

\section{Preliminaries} \label{sec: minimax regret}

In this section, we review existing minimax regret lower bounds and minimax policies from literature. These results apply to both sub-Gaussian and heavy-tailed rewards. The discussion is made first for $V_T=0$. Then, we show how the minimax regret lower bound for $V_T=0$ can be extended to establish the minimax regret lower bound for $V_T > 0$. To this end, we review two UCB algorithms for the stationary stochastic MAB problem: UCB1 and MOSS. In the later sections, they are extended to design a variety of policies to match with the minimax regret lower bound for $V_T > 0$.

\subsection{Lower Bound for Minimax Regret when $V_T=0$} \label{lower bound v=0}

In the setting of $V_T=0$, for each arm $k \in \mathcal{K}$, $\mu_t^k$ is identical for all $t \in \mathcal{T}$. In stationary stochastic MAB problems, the rewards from each arm $k\in \mathcal{K}$ are independent and identically distributed, so they belong to the environment set $\mathcal{E}(0,T,K)$. 
According to~\cite{auer1995gambling}, if $V_T =0$, the minimax regret is no smaller than $1/20 \sqrt{KT}$. This result is closely related to the standard logarithmic lower bound on regret for stationary stochastic MAB problems as discussed below. Consider a scenario in which there is a unique best arm and all other arms have identical mean rewards such that the gap between optimal and suboptimal mean rewards is $\Delta$. From~\cite{mannor2004sample}, for such a stationary stochastic MAB problem
\begin{equation}\label{eq: stationary lb}
	R_T^\rho \geq C_1 \frac{K}{\Delta} \ln \Big(\frac{T \Delta^2}{K}\Big) + C_2 \frac{K}{\Delta},
\end{equation}
for any policy $\rho$, where $C_1$ and $C_2$ are some positive constants.
It needs to be noted that for $\Delta = \sqrt{K/T}$, the above lower bound becomes $C_2\sqrt{KT}$, which matches with the lower bound $1/20 \sqrt{KT}$.

\subsection{Lower Bound for Minimax Regret when $V_T>0$}

In the setting of $V_T>0$, we recall here the minimax regret lower bound for nonstationary stochastic MAB problems.

\begin{lemma}[Minimax Lower Bound: $V_T>0$~\cite{Rexp3}] \label{lemma:nonstationary lower bound}
	For the non-stationary MAB problem with $K$ arms, time horizon $T$ and variation budget $V_T \in [1/K, T/K]$, 
	\[\inf_{\rho }\sup_{\mathcal{F}_T^\mathcal{K} \in \mathcal{E}(V_T,T,K)}  R_T^\rho \geq C(KV_T)^{\frac{1}{3}}T^{\frac{2}{3}},\]
	where $C\in  \real_{> 0}$ is some constant.
\end{lemma}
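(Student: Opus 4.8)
The plan is the standard ``batching'' reduction to the stationary lower bound: partition the horizon into several equal blocks, plant a fresh hard stationary instance on each block, and then pick the number of blocks as large as the variation budget allows. First I would fix an integer $m\ge 1$ to be chosen later, set $\tau=\lfloor T/m\rfloor$, and split $\mathcal{T}$ into $m$ consecutive blocks $B_1,\dots,B_m$ of length $\tau$ (a remainder of fewer than $m$ slots is negligible). I then construct a piecewise-stationary environment with Bernoulli rewards: within block $B_j$ every arm has mean $\tfrac12$ except a distinguished arm $k_j$ whose mean is $\tfrac12+\Delta$, and the indices $k_1,\dots,k_m$ are drawn independently and uniformly from $\mathcal{K}$. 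Bernoulli rewards are $\tfrac12$-sub-Gaussian and have second moment at most $1$, so the instance is admissible under both the sub-Gaussian assumption and Assumption~\ref{ass: heavy-tailed}. Since the means are constant inside each block and only one arm rises by $\Delta$ while one falls by $\Delta$ at a block boundary, the total variation of any realization is at most $(m-1)\Delta$; I therefore impose $m\Delta\le V_T$, which places every such environment in $\mathcal{E}(V_T,T,K)$.

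Next I would lower bound the regret block by block, working with the averaged environment: since the worst-case regret over $\mathcal{E}(V_T,T,K)$ dominates its average over the random choice of $(k_1,\dots,k_m)$, any averaged lower bound suffices. Conditioned on the agent's history up to the start of $B_j$, the index $k_j$ is still uniform on $\mathcal{K}$ and independent of that history, so within $B_j$ the agent confronts a $K$-armed stationary bandit with gap $\Delta$ about which it has learned nothing from the past. The information-theoretic (change-of-measure plus Pinsker) argument underlying \eqref{eq: stationary lb}, equivalently the $\Omega(\sqrt{K\tau})$ bound of~\cite{auer1995gambling,mannor2004sample}, then shows that the expected regret accrued on $B_j$ is at least $c_1\min\{\Delta\tau,\sqrt{K\tau}\}$ for an absolute constant $c_1>0$. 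Summing over the $m$ blocks gives $R_T^\rho\ge c_1 m\min\{\Delta\tau,\sqrt{K\tau}\}$ for every policy $\rho$, and choosing $\Delta=\tfrac14\sqrt{K/\tau}$ balances the two terms to yield $R_T^\rho\ge c_2 m\sqrt{K\tau}\ge c_3\sqrt{KTm}$.

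It remains to optimize over $m$. With $\Delta=\tfrac14\sqrt{K/\tau}$ and $\tau=T/m$, the budget constraint $m\Delta\le V_T$ reads $\tfrac14 m^{3/2}\sqrt{K/T}\le V_T$, i.e.\ $m\lesssim (V_T^2T/K)^{1/3}$; taking $m\asymp (V_T^2T/K)^{1/3}$ and substituting into $R_T^\rho\ge c_3\sqrt{KTm}$ produces, after the elementary exponent arithmetic, $R_T^\rho\ge C(KV_T)^{1/3}T^{2/3}$. The hypothesis $V_T\le T/K$ is exactly what guarantees $\tau\gtrsim K$ (hence $\Delta\le\tfrac14$ and the stationary lower bound is in force on each block); when $V_T$ is so small that the prescribed $m$ would drop below $1$, one simply takes $m=1$ and invokes the stationary bound $R_T^\rho\ge\tfrac{1}{20}\sqrt{KT}$ directly, which already exceeds $C(KV_T)^{1/3}T^{2/3}$ throughout that regime.

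I expect the conceptual content to be light: the whole proof is the observation that the budget can ``pay'' for about $(V_T^2T/K)^{1/3}$ block transitions, each of which forces the learner to re-pay an $\Omega(\sqrt{K\tau})$ relearning cost, and balancing these two effects is what pins the exponent at $T^{2/3}$. The steps that need care are applying the change-of-measure/Pinsker bound \emph{conditionally} on the past so that the per-block instances are genuinely ``fresh,'' the constant bookkeeping in the two chained inequalities, and checking that the per-block instance stays admissible ($\tau\gtrsim K$) together with the small-$V_T$ boundary case; none of these is a genuine obstacle, which is consistent with the result being quoted from~\cite{Rexp3} rather than proved anew.
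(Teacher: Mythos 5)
Your proposal is correct and follows essentially the same route the paper takes: the paper quotes this lemma from~\cite{Rexp3} and justifies it by exactly your batching construction --- epochs of length $\tau=\lceil K^{1/3}(T/V_T)^{2/3}\rceil$, a single best arm with gap $\Delta=\sqrt{K/\tau}$ switching at epoch boundaries so the variation stays within budget, and the per-epoch stationary bound $\Omega(\sqrt{K\tau})$ summed over $T/\tau$ epochs. Your additions (randomizing the best arm's identity independently per block, the budget constraint $m\Delta\le V_T$, and the small-$V_T$ boundary case) are the right details to make the sketch rigorous and introduce no discrepancy.
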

To understand this lower bound, consider the following non-stationary environment. The horizon $\mathcal{T}$ is partitioned into epochs of length $\tau = \big\lceil{ K^\frac{1}{3} ({T/V_T})^{\frac{2}{3}}} \big \rceil $. In each epoch, the reward distribution sequences are stationary and all the arms have identical mean rewards except for the unique best arm. Let the gap in the mean be $\Delta = \sqrt{K/\tau}$. The index of the best arm switches at the end of each epoch following some unknown rule. So, the total variation is no greater than $\Delta T/ \tau$, which satisfies the variation budget $V_T$. Besides, for any policy $\rho$, we know from~\eqref{eq: stationary lb} that worst case regret in each epoch is no less than $C_2\sqrt{K\tau}$. Summing up the regret over all the epochs, minimax regret is lower bounded by $T/ \tau \times C_2\sqrt{K\tau} $, which is consistent with Lemma~\ref{lemma:nonstationary lower bound}.

	\subsection{UCB Algorithms in Stationary Environments}\label{ucb-background}
The family of UCB algorithms uses the principle called optimism in the face of uncertainty. In these policies, at each time slot, a UCB index which is a statistical index composed of both mean reward estimate and the associated uncertainty measure is computed at each arm, and the arm with the maximum UCB is picked. Within the family of UCB algorithms, two state-of-the-art algorithms for the stationary stochastic MAB problems are UCB$1$~\cite{PA-NCB-PF:02} and MOSS~\cite{MOSS}. Let $n_k(t)$ be the number of times arm $k$ is sampled until time $t-1$, and $ \hat \mu_{k,n_k(t)}$ be the associated empirical mean. Then, UCB$1$ computes the UCB index for each arm $k$ at time $t$ as 
\[\supscr{g_{k,t}}{UCB1} = \hat \mu_{k,n_k(t)} + \sqrt{\frac{ 2 \ln t}{n_k(t)}}.\]
It has been proved in~\cite{PA-NCB-PF:02} that, for the stationary stochastic MAB problem, UCB1 satisfies
\[ \supscr{R_T}{UCB1} \leq 8 \sum_{k:\Delta_k > 0} \frac{\ln T}{\Delta_k} + \left(1+\frac{\pi^2}{3}\right) \sum_{k=1}^K \Delta_k ,\]
where $\Delta_k$ is the difference in the mean rewards from arm $k$ and the best arm. In~\cite{MOSS}, a simple variant of this result is given by selecting values for $\Delta_k$ to maximize the upper bound, resulting in
\[\sup_{\mathcal{F}_T^\mathcal{K} \in \mathcal{E}(0,T,K)} \supscr{R_T}{UCB1} \leq 10\sqrt{(K-1)T(\ln T)}.\]

Comparing this result with the lower bound on the minimax regret discussed in Section~\ref{lower bound v=0}, there exists an extra factor $\sqrt{\ln T}$. This issue has been resolved by the MOSS algorithm. With prior knowledge of horizon length $T$, and the UCB index for MOSS is expressed as
\[\supscr{g_{k,t}}{MOSS} = \hat \mu_{k,n_k(t)} + \sqrt{\frac{ \max  \Big(\ln \Big(\frac{T}{K n_k(t)}\Big), 0\Big)}{n_k(t)}}.\]
We now recall the worst-case regret upper bound for MOSS.

\begin{lemma}[Worst-case regret upper bound for MOSS~\cite{MOSS}]\label{theorem:MOSS bound}
	For the stationary stochastic MAB problem ($V_T=0$), the worst-case regret of the MOSS algorithm satisfies
	\[\sup_{\mathcal{F}_T^\mathcal{K} \in \mathcal{E}(0,T,K)} R_T^{\text{MOSS}} \leq 49\sqrt{KT}.\]
\end{lemma}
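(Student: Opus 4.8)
The statement is the minimax regret bound for MOSS established in \cite{MOSS}; the plan is to reproduce the argument of that reference. Assume without loss of generality that arm $1$ is optimal, so $\mu_t^* = \mu^1$ and $\Delta_k = \mu^1 - \mu^k$ for each arm (means are time-invariant since $V_T=0$). First I would write the regret in per-arm form,
\[
R_T^{\text{MOSS}} = \sum_{k=1}^{K} \Delta_k\, \expt\!\left[n_k(T+1)\right],
\]
so that it suffices to bound $\Delta_k\,\expt[n_k(T+1)]$ for every suboptimal arm, uniformly over $\mathcal{E}(0,T,K)$. Throughout write $\ell_+(x) := \max\{\ln x,\, 0\}$ for the truncated logarithm appearing in the MOSS bonus.

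The core of the argument is a single random variable quantifying how far below its mean the optimal arm's index can dip:
\[
\delta := \Big( \mu^1 - \min_{1 \le s \le T} \big( \hat\mu_{1,s} + \sqrt{\ell_+(T/(Ks))\,/\,s} \big) \Big)_+ .
\]
The first key step is to show $\expt[\delta] = O(\sqrt{K/T})$. I would derive this from a tail estimate: for fixed $u>0$, the event $\{\delta > u\}$ forces $\hat\mu_{1,s} < \mu^1 - u - \sqrt{\ell_+(T/(Ks))/s}$ for some $s \le T$; a peeling argument over dyadic blocks of $s$, combined with a maximal Hoeffding inequality for the running empirical mean of a $1/2$-sub-Gaussian sequence, yields $\prob(\delta > u) \le c_1 \min\{1,\, K/(T u^2)\}$, whence $\expt[\delta] = \int_0^\infty \prob(\delta > u)\,du \le c_2\sqrt{K/T}$. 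This is precisely the step where the shape of the MOSS confidence term matters: the factor $\ln(T/(Ks))$ is just large enough for small $s$ to make the peeling sum converge, whereas replacing it by $\ln T$ (as in UCB$1$) would reintroduce the spurious $\sqrt{\ln T}$ noted in Section~\ref{ucb-background}.

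The second key step bounds $\expt[n_k(T+1)]$ for a suboptimal arm $k$. Whenever MOSS pulls arm $k$ at time $t$ we have $g_{k,t}^{\text{MOSS}} \ge g_{1,t}^{\text{MOSS}} \ge \mu^1 - \delta$, which rearranges to
\[
\hat\mu_{k,n_k(t)} - \mu^k \;\ge\; \Delta_k - \delta - \sqrt{\ell_+(T/(K n_k(t)))\,/\,n_k(t)}.
\]
Hence, on $\{\delta \le \Delta_k/2\}$, arm $k$ is pulled only while its empirical mean overshoots $\mu^k$ by about $\Delta_k/2$ minus its bonus, and a union bound over the number of samples together with Hoeffding's inequality gives $\expt[n_k(T+1)\,\mathbf{1}\{\delta \le \Delta_k/2\}] \le c_3\big(\ell_+(T\Delta_k^2/K)+1\big)/\Delta_k^2$. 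Summing $\Delta_k\,\expt[n_k(T+1)]$ over $k$: the ``large $\delta$'' part is at most $\sum_k \Delta_k\,\expt[n_k(T+1)\mathbf{1}\{\delta > \Delta_k/2\}] \le 2T\,\expt[\delta] = O(\sqrt{KT})$, while each remaining term is $\Delta_k \cdot c_3(\ell_+(T\Delta_k^2/K)+1)/\Delta_k^2 = O(\sqrt{T/K})$ using $\sup_{x>0}\ell_+(x)/\sqrt{x} < \infty$, giving $O(\sqrt{KT})$ over the $K-1$ suboptimal arms. Combining the two steps yields $R_T^{\text{MOSS}} = O(\sqrt{KT})$, and the remaining effort is bookkeeping of the absolute constants $c_1,c_2,c_3$ to land on $49\sqrt{KT}$.

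The step I expect to be the main obstacle is the bound $\expt[\delta] = O(\sqrt{K/T})$: the peeling decomposition of the range $1 \le s \le T$ and the maximal (uniform-in-$s$) sub-Gaussian deviation inequality must be arranged so that the geometric series generated by the $\ell_+(T/(Ks))$ term is summable with an explicit small constant, and it is that explicit constant which ultimately propagates into the stated value $49$.
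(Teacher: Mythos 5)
The paper does not actually prove this lemma; it is imported by citation from~\cite{MOSS}, so there is no in-paper argument to compare against. Your reconstruction follows the standard proof of that reference: the decomposition $R_T=\sum_k \Delta_k\,\expt[n_k(T+1)]$, the random variable $\delta$ capturing the worst downward excursion of the optimal arm's index, a peeling plus maximal-Hoeffding tail bound giving $\expt[\delta]=O(\sqrt{K/T})$, and a per-arm pull-count bound on the event $\{\delta\le \Delta_k/2\}$. This is also exactly the machinery the present paper redevelops for the nonstationary case: your tail bound $\prob(\delta>u)\lesssim K/(Tu^2)$ is the $l=1$ instance of Lemma~\ref{lemma: suff_sample}, and your split into a ``fairly estimated'' and an ``underestimated'' optimal arm mirrors the decomposition into~\eqref{overestimate} and~\eqref{underestimate} in the proof of Theorem~\ref{theorem:SW-MOSS bound}. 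So the approach is correct and consistent with how the authors use the result.

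One step as written does not quite close. In the ``small $\delta$'' part you claim each term $\Delta_k\cdot c_3\big(\ell_+(T\Delta_k^2/K)+1\big)/\Delta_k^2$ is $O(\sqrt{T/K})$. The logarithmic piece is indeed controlled by $\sup_{x>0}\ell_+(x)/\sqrt{x}<\infty$, but the residual $c_3/\Delta_k$ is unbounded as $\Delta_k\to 0$, so the claim fails for arms with very small gaps. You need the standard preliminary reduction: arms with $\Delta_k\le\sqrt{K/T}$ contribute at most $\sqrt{K/T}\sum_k\expt[n_k(T+1)]=\sqrt{KT}$ to the regret in total and can be set aside, after which $1/\Delta_k\le\sqrt{T/K}$ for every remaining arm and your bound holds. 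With that one-line fix the argument is complete up to the bookkeeping of absolute constants that produces the stated value $49$.
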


\section{UCB Algorithms for Sub-Gaussian Nonstationary Stochastic MAB Problems} \label{sec: 3 policies}
In this section,  we extend UCB$1$ and MOSS to design nonstationary UCB policies for scenarios with $V_T>0$. Three different techniques are employed, namely periodic resetting, sliding observation window and discount factor, to deal with the remembering-forgetting tradeoff. The proposed algorithms are analyzed to provide guarantees on the worst-case regret. We show their performances match closely with the lower bound in Lemma~\ref{lemma:nonstationary lower bound}.

The following notations are used in later discussions. Let $N=\ceil{T/\tau}$, for some $\tau \in \until{T}$,  and let $\{\mathcal{T}_1,\ldots, \mathcal{T}_N\}$ be a partition of time slots $\mathcal{T}$, where each epoch $\mc T_i$ has length $\tau$ except possibly $\mathcal{T}_N$. In particular,
\[ \mathcal{T}_i = \Big\{1+(i-1)\tau \: , \ldots , \:\min \left( i\tau, T \right ) \Big\}, \; i \in \until{N}.\]
Let the maximum mean reward within $\mathcal{T}_i$ be achieved at time $\tau_i \in \mathcal{T}_i$ and arm $\kappa_i$, i.e.,	
$\mu^{\kappa_i}_{\tau_i} = \max_{t \in \mathcal{T}_i} \; \mu_t^*$.
We define the variation within $\mathcal{T}_i $ as
\[ v_i := \sum_{t \in \mathcal{T}_i} \:  \sup_{k \in \mathcal{K}} \: \abs{\mu_{t+1}^k-\mu_t^k},\]
where we trivially assign $\mu_{T+1}^k=\mu_T^k$ for all $k \in \mathcal{K}$. Let $\indicator \left\{\cdot\right\}$ denote the indicator function and $\abs{\cdot}$ denote the cardinality of the set, if its argument is a set, and the absolute value if its argument is a real number.

\subsection{Resetting MOSS Algorithm}
Periodic resetting is an effective technique to preserve the freshness and authenticity of the information history. It has been employed in~\cite{Rexp3} to modify Exp$3$ to design Rexp$3$ policy for nonstationary stochastic MAB problems. We extend this approach to MOSS and propose nonstationary policy Resetting MOSS (R-MOSS). In R-MOSS, after every $\tau$ time slots, the sampling history is erased and MOSS is restarted.  The pseudo-code is provided in Algorithm~\ref{algo:r-moss} and the performance in terms of the worst-case regret for is established below.

\IncMargin{.3em}
\begin{algorithm}[t]
	{\footnotesize
		\SetKwInOut{Input}{  Input}
		\SetKwInOut{Set}{  Set}
		\SetKwInOut{Title}{Algorithm}
		\SetKwInOut{Require}{Require}
		\SetKwInOut{Output}{Output}
		
		{
			
			\Input{$V_T \in \real_{\ge 0}$ and $T\in \natural$}
			
			\Set{$\tau = \Big \lceil { K^\frac{1}{3} \left(T / V_T \right)^{\frac{2}{3}}} \Big \rceil$}
			
		}
		
		\medskip		
		%
		
		\Output{sequence of arm selection}
		
		\medskip
		
		\nl \While{$t \leq T$}{
						
			\smallskip
			\nl \If{$\mod(t,\tau) = 0$}{
				\smallskip
				
				\nl
				Restart the MOSS policy;
				\smallskip
				
				}

		}

		
		\caption{\textit{R-MOSS}}
		\label{algo:r-moss}}
\end{algorithm} 
\DecMargin{.3em}

\begin{theorem}\label{theorem:R-MOSS bound}
	For the sub-Gaussian nonstationary MAB problem with $K$ arms, time horizon $T$, variation budget $V_T>0$, and $\tau = \Big \lceil { K^\frac{1}{3} \left(T / V_T \right)^{\frac{2}{3}}} \Big \rceil$, the worst case regret of R-MOSS satisfies
	\[\sup_{\mathcal{F}_T^\mathcal{K} \in \mathcal{E}(V_T,T,K)} \supscr{ R_T }{R-MOSS} \in  \mc O((KV_T)^{\frac{1}{3}} T^{\frac{2}{3}}).\]
\end{theorem}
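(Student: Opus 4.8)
The plan is to split the horizon into the $N=\ceil{T/\tau}$ epochs $\mathcal{T}_1,\dots,\mathcal{T}_N$ on which R-MOSS runs independent MOSS instances, to bound the regret of each instance by the (nearly stationary) MOSS guarantee of Lemma~\ref{theorem:MOSS bound} plus a penalty proportional to the within-epoch variation, and then to choose $\tau$ so as to balance the two resulting contributions.

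First I would write $\supscr{R_T}{R-MOSS}=\sum_{i=1}^{N}R_i$, where $R_i:=\expt\big[\sum_{t\in\mathcal{T}_i}(\mu_t^*-\mu_t^{\varphi_t})\big]$ is the regret accrued during epoch $\mathcal{T}_i$. For $t\in\mathcal{T}_i$ we have $\mu_t^*\le\mu_{\tau_i}^{\kappa_i}$ by the definitions of $\tau_i$ and $\kappa_i$, and $\abs{\mu_t^k-\mu_{\tau_i}^k}\le v_i$ for every arm $k$, since the drift of any single arm inside $\mathcal{T}_i$ is dominated by $v_i$. Hence $\mu_t^*-\mu_t^{\varphi_t}\le\big(\mu_{\tau_i}^{\kappa_i}-\mu_{\tau_i}^{\varphi_t}\big)+v_i$, and therefore
\[
R_i\ \le\ \expt\Big[\sum_{t\in\mathcal{T}_i}\big(\mu_{\tau_i}^{\kappa_i}-\mu_{\tau_i}^{\varphi_t}\big)\Big]+\tau v_i .
\]
The first term is the pseudo-regret of a fresh MOSS instance, run for at most $\tau$ rounds with horizon parameter $\tau$, measured against the \emph{frozen} mean-reward vector $\big(\mu_{\tau_i}^k\big)_{k\in\mathcal{K}}$, whose optimal arm is $\kappa_i$.

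The technical heart is to show that this frozen-benchmark pseudo-regret is at most $49\sqrt{K\tau}+\mc O(\tau v_i)$. I would establish this by revisiting the MOSS analysis underlying Lemma~\ref{theorem:MOSS bound}: every empirical average appearing there is an average of independent $1/2$-sub-Gaussian rewards whose per-round means lie within $v_i$ of the corresponding frozen mean $\mu_{\tau_i}^k$, so each such average concentrates around a value at most $v_i$ from $\mu_{\tau_i}^k$. Propagating this $v_i$-slack through the index comparisons, a suboptimal arm $k$ behaves like an arm with effective gap $\bar\Delta_k-2v_i$, where $\bar\Delta_k:=\mu_{\tau_i}^{\kappa_i}-\mu_{\tau_i}^{k}$; writing $\bar\Delta_k\,\expt[n_k]\le(\bar\Delta_k-2v_i)_+\,\expt[n_k]+2v_i\,\expt[n_k]$ and summing over $k$, the first group of terms reproduces the stationary bound $49\sqrt{K\tau}$ while the second is at most $2v_i\sum_k\expt[n_k]\le2\tau v_i$ (here $n_k$ is the number of pulls of arm $k$ in the epoch). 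Carrying this perturbation cleanly through the full MOSS proof — in particular through its peeling and union-bound steps — is the step I expect to be the main obstacle; everything around it is bookkeeping. The last epoch has length at most $\tau$, so the bound applies there a fortiori, and the corner case $\tau\ge T$, in which R-MOSS never resets, is covered directly by Lemma~\ref{theorem:MOSS bound}.

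Combining the two displays gives $R_i\le49\sqrt{K\tau}+C\tau v_i$ for an absolute constant $C$. Summing over epochs and using that $\{\mathcal{T}_i\}$ partitions $\mathcal{T}$, so that $\sum_{i=1}^{N}v_i=v(\mathcal{F}_T^{\mathcal{K}})\le V_T$, I obtain
\[
\supscr{R_T}{R-MOSS}\ \le\ 49\,N\sqrt{K\tau}+C\,\tau V_T .
\]
It remains to substitute $\tau=\ceil{K^{1/3}(T/V_T)^{2/3}}$ and $N\le T/\tau+1$. From $\tau\ge K^{1/3}(T/V_T)^{2/3}$ one gets $(T/\tau)\sqrt{K\tau}=T\sqrt{K/\tau}\le(KV_T)^{1/3}T^{2/3}$, and from $\tau\le K^{1/3}(T/V_T)^{2/3}+1$ one gets $\tau V_T\le(KV_T)^{1/3}T^{2/3}+V_T$ and $\sqrt{K\tau}=\mc O\big((KV_T)^{1/3}T^{2/3}\big)$, all in the regime $V_T\ge c\sqrt{K/T}$ (equivalently $\tau\le T$) that the bound targets. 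Collecting the terms yields $\supscr{R_T}{R-MOSS}\in\mc O\big((KV_T)^{1/3}T^{2/3}\big)$, as claimed.
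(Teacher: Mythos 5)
Your proposal is correct and follows essentially the same route as the paper: the same epoch decomposition with the frozen benchmark $\mu_{\tau_i}^{\kappa_i}$, the same $2v_i$ slack absorbed into an effective gap (the paper phrases this via the set of ``bad arms'' with $\mu_{\tau_i}^{\kappa_i}-\mu_{\tau_i}^k\ge 2v_i$, which is your $(\bar\Delta_k-2v_i)_+$ split), the same appeal to the $49\sqrt{K\tau}$ MOSS worst-case bound for the residual term, and the same summation $\sum_i v_i\le V_T$ followed by substituting $\tau$. The step you flag as the main obstacle—propagating the $v_i$ perturbation through the MOSS concentration/peeling argument—is exactly the step the paper also leaves at sketch level.
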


\begin{proof}[Sketch of the proof]
	Note that one run of MOSS takes place in each epoch. For epoch $\mathcal{T}_i$, define the set of \emph{bad arms} for R-MOSS by
	\begin{equation}\label{def: Rbad}
		\supscr{\mathcal{B}_i}{R} := \setdef{ k \in \mathcal{K} }{ \mu_{\tau_i}^{\kappa_i} - \mu_{\tau_i}^k \geq 2 v_i }.
	\end{equation}
	Notice that for any $t_1, t_2 \in \mathcal{T}_i$,
	\begin{equation}\label{ineq: variation}
	\abs{\mu_{t_1}^k - \mu_{t_2}^k} \leq v_i , \quad \forall k \in \mathcal{K}.
	\end{equation}
	Therefore, for any $t\in \mathcal{T}_i$, we have
	\begin{align*}
	\mu_{t}^* - \mu_{t}^{\varphi_t} &\leq  \mu_{\tau_i}^{\kappa_i} - \mu_{t}^{\varphi_t} \leq \mu_{\tau_i}^{\kappa_i} - \mu_{\tau_i}^{\varphi_t} + v_i.
	\end{align*}
	Then, the regret from $\mathcal{T}_i$ can be bounded as the following,	
	\begin{align}
	\expt \bigg [ \sum_{t\in \mathcal{T}_i}  \mu_t^* - \mu_t^{\varphi_t} \bigg] &\leq \abs{\mathcal{T}_i} v_i +\expt \bigg [ \sum_{t\in \mathcal{T}_i}  \mu_{\tau_i}^{\kappa_i} - \mu_{\tau_i}^{\varphi_t} \bigg] \nonumber\\
	&\leq 3 \abs{\mathcal{T}_i} v_i + S_i, \label{ineq: regret_epoch}
	\end{align}
	where $\displaystyle S_i = \expt \bigg [ \sum_{t\in \mathcal{T}_i} \sum_{k \in \supscr{\mathcal{B}_i}{R}} \indicator \left\{\varphi_t = k \right\} \left(\mu_{\tau_i}^{\kappa_i} - \mu_{\tau_i}^{\varphi_t} - 2v_i \right) \bigg]$.
	
	Now, we have decoupled the problem, enabling us to the generalize the analysis of MOSS in stationary environment~\cite{MOSS} to bound $S_i$. We will only specify the generalization steps and skip the details for brevity.

 First notice inequality~\eqref{ineq: variation} indicates that for any $k \in \supscr{\mathcal{B}_i}{R}$ and any $t\in \mathcal{T}_i$,
	\[ \mu_{t}^{\kappa_i} \geq \mu_{\tau_i}^{\kappa_i} - v_i \text{ and } \mu_{t}^k \leq \mu_{\tau_i}^{k} + v_i.\]
	So, at any $t \in \mathcal{T}_i$, $\hat\mu_{{\kappa_i},n_{\kappa_i}(t)} $ concentrate around a value no smaller than $\mu_{\tau_i}^{\kappa_i} - v_i$, and  $\hat \mu_{k,n_k(t)}$ concentrate around a value no greater than $\mu_{\tau_i}^{k} + v_i$ for any $k \in \supscr{B_i}{R}$. Also $\mu_{\tau_i}^{\kappa_i} - v_i \geq \mu_{\tau_i}^{k} + v_i$ due to the definition in~\eqref{def: Rbad}. 
	
	In the analysis of MOSS in stationary environment~\cite{MOSS}, the UCB of each suboptimal arm is compared with the best arm and each selection of suboptimal arm $k$ contribute $\Delta_k$ in regret. Here, we can apply a similar analysis by comparing the UCB of each arm $k \in \supscr{B_i}{R}$ with $\kappa_i$ and each selection of arm $k \in \supscr{B_i}{R}$ contributes $(\mu_{\tau_i}^{\kappa_i} - v_i) - (\mu_{\tau_i}^{k} + v_i)$ in $S_i$. Accordingly, we borrow the upper bound in Lemma~\ref{theorem:MOSS bound} to get $S_i \leq 49\sqrt{K \abs{\mathcal{T}_i}}$.

	 Substituting the upper bound on $S_i$ into~\eqref{ineq: regret_epoch} and summarizing over all the epochs, we conclude that 
	\[ \sup_{\mathcal{F}_T^\mathcal{K} \in \mathcal{E}(V_T,T,K)} \supscr{ R_T }{R-MOSS} \leq 3\tau V_T + \sum_{i=1}^{N} 49\sqrt{K \tau},\]
	which implies the theorem. 
\end{proof}

The upper bound in Theorem~\ref{theorem:R-MOSS bound} is in the same order as the lower bound in Lemma~\ref{lemma:nonstationary lower bound}. So, the worst-case regret for R-MOSS is order optimal.

\subsection{Sliding-Window MOSS Algorithm }

We have shown that periodic resetting coarsely adapts the stationary policy to a nonstationary setting. However, it is inefficient to entirely remove the sampling history at the restarting points and the regret accumulates quickly close to these points. In~\cite{AG-EM:08}, a sliding observation window is used to erase the outdated information smoothly and more efficiently utilize the  information history. The authors proposed the SW-UCB algorithm that intends to solve the MAB problem with piece-wise stationary mean rewards. We show that a similar approach can also deal with the general nonstationary environment with a variation budget. In contrast to SW-UCB, we integrate the sliding window technique with MOSS instead of UCB1 and achieve the order optimal worst-case regret.

Let the sliding observation window at time $t$ be $\mathcal{W}_t :=\left\{\min(1,t-\tau),\ldots, t-1 \right \}$. Then, the associated mean estimator is given by
\[\hat \mu_{n_k(t)}^k \! = \! \frac{1}{n_k(t)}\! \sum_{s\in \mathcal{W}_t} \!\! X_s \indicator\{ \varphi_s=k \}, \,\, n_k(t) = \! \sum_{s\in \mathcal{W}_t} \!\! \indicator {\{ \varphi_s=k \}}. \]
For each arm $k \in \mathcal{K}$, define the UCB index for SW-MOSS by
\[g_{t}^k = \hat \mu_{n_k(t)}^k + c_{n_k(k)}, \,\, c_{n_k(t)} = \sqrt{\eta \frac{ \max  \Big(\ln \Big(\frac{\tau}{K n_k(t)}\Big), 0\Big)}{n_k(t)}}, \]
where $\eta>1/2$ is a tunable parameter. With these notations, SW-MOSS is defined in Algorithm \ref{algo:sw-moss}. To analyze it, we will use the following concentration bound for sub-Gaussian random variables.

\IncMargin{.3em}
\begin{algorithm}[t]\label{sliding-window MOSS}
	{\footnotesize
		\SetKwInOut{Input}{  Input}
		\SetKwInOut{Set}{  Set}
		\SetKwInOut{Title}{Algorithm}
		\SetKwInOut{Require}{Require}
		\SetKwInOut{Output}{Output}
		
		{
		
		\Input{$V_T \in \real_{> 0}$, $T\in \natural$ and $\eta>1/2$}
		
		\Set{$\tau = \Big \lceil { K^\frac{1}{3} \left(T/V_T \right) ^{\frac{2}{3}}} \Big \rceil$}
	
		}
		
		\medskip

		\Output{sequence of arm selection}
				
		\medskip
		
		\nl Pick each arm once.
		\smallskip
		
		\nl \While{$t \leq T$}{	
			\smallskip		
			{
				 Compute statistics within $\mathcal{W}_t =\left\{\min(1,t-\tau),\ldots, t-1 \right \}$:
				\[\hat \mu_{n_k(t)}^k \! = \! \frac{1}{n_k(t)}\! \sum_{s\in \mathcal{W}_t} \!\! X_s \indicator\{ \varphi_s=k \}, \,\, n_k(t) = \! \sum_{s\in \mathcal{W}_t} \!\! \indicator {\{ \varphi_s=k \}}\]
				
				Pick arm
				$\displaystyle \varphi_t=\arg \max_{k \in \mathcal{K}} \, \hat \mu_{n_k(t)}^k + \sqrt{\eta \frac{ \max  \Big(\ln \Big(\frac{\tau}{K n_k(t)}\Big), 0\Big)}{n_k(t)}}$;
				
			}				
		}

		
		\caption{\textit{SW-MOSS}}
		\label{algo:sw-moss}}
\end{algorithm} 
\DecMargin{.3em}

\begin{fact}[Maximal Hoeffding inequality\cite{WH:63}] \label{fact: azuma}
	Let $X_1,\ldots, X_n$ be a sequence of independent $1/2$ sub-Gaussian random variables. Define $d_i := X_i- \mu_i $, then for any $ \delta > 0$, 
	\begin{align*}
		&\prob \bigg( \exists m \in \until{n} : \sum_{i=1}^{m} d_i \geq \delta \bigg)\leq \exp \left( -{2  \delta^2}/{n} \right)\\
		\text{and } & \prob \bigg(\exists m \in \until{n} : \sum_{i=1}^{m} d_i \leq -\delta \bigg)\leq \exp \left( -{2  \delta^2}/{n} \right).
	\end{align*}		
\end{fact}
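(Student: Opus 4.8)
The plan is to follow the classical route to maximal concentration bounds: build an exponential submartingale out of the partial sums, apply Doob's maximal inequality, and then optimize the resulting Chernoff bound over the free parameter. Write $S_m := \sum_{i=1}^m d_i$ for $m \in \until{n}$ and let $\mathcal{F}_m := \sigma(X_1,\dots,X_m)$. Fix $\lambda > 0$ and set $Z_m := \exp(\lambda S_m)$. Using the independence of the $X_i$ together with the two-sided $1/2$ sub-Gaussian bound $\expt[\exp(\lambda d_i)] \le \exp(\lambda^2/8)$, and noting that $\expt[\exp(\lambda d_m)] \ge \exp(\lambda \expt[d_m]) = 1$ by Jensen's inequality, one checks that
\[ \expt[Z_m \mid \mathcal{F}_{m-1}] = Z_{m-1} \, \expt[\exp(\lambda d_m)] \ge Z_{m-1}, \]
so $(Z_m)_{m\in\until{n}}$ is a nonnegative submartingale.

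Next I would apply Doob's maximal inequality for submartingales, $\prob(\max_{1\le m\le n} Z_m \ge a) \le \expt[Z_n]/a$, with $a = e^{\lambda\delta}$. Since $\lambda > 0$, the event $\{\exists m \in \until{n} : S_m \ge \delta\}$ coincides with $\{\max_{1\le m\le n} Z_m \ge e^{\lambda\delta}\}$, and independence gives $\expt[Z_n] = \prod_{i=1}^n \expt[\exp(\lambda d_i)] \le \exp(n\lambda^2/8)$. Hence
\[ \prob\Big(\exists m \in \until{n} : S_m \ge \delta \Big) \le \exp\!\Big(-\lambda\delta + \tfrac{n\lambda^2}{8}\Big). \]
Minimizing the exponent over $\lambda > 0$ (the minimizer is $\lambda = 4\delta/n$) yields the bound $\exp(-2\delta^2/n)$, which is the first claimed inequality.

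For the second inequality I would repeat the argument with $d_i$ replaced by $-d_i$: the $-d_i$ are independent and mean zero, and $\expt[\exp(\lambda(-d_i))] = \expt[\exp((-\lambda)d_i)] \le \exp(\lambda^2/8)$ because the sub-Gaussian hypothesis holds for every real exponent; applying the first bound to $(-d_i)$ then gives $\prob(\exists m \in \until{n} : \sum_{i=1}^m d_i \le -\delta) \le \exp(-2\delta^2/n)$. I do not expect a genuine obstacle here, since this is essentially the textbook maximal Hoeffding bound; the only points requiring a little care are that $(Z_m)$ is in general only a submartingale (the $d_i$ need not be symmetric, so the conditional exponential moment can strictly exceed $1$), which is exactly what makes Doob's submartingale inequality — rather than an equality-based martingale argument — the right tool, and that the $1/2$ sub-Gaussian assumption must be invoked in its two-sided form so that the same chain of estimates covers both tails.
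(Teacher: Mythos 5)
Your proof is correct: the exponential submartingale $Z_m=\exp(\lambda S_m)$, Doob's maximal inequality with $a=e^{\lambda\delta}$, the product bound $\expt[Z_n]\le\exp(n\lambda^2/8)$ from independence and the $1/2$ sub-Gaussian hypothesis, and the optimization at $\lambda=4\delta/n$ giving exponent $-2\delta^2/n$ are all accurate, as is the reflection argument for the lower tail. The paper offers no proof of this Fact---it is quoted directly from Hoeffding's 1963 paper---so there is nothing to compare against; your derivation is the standard one and fills that gap correctly.
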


At time $t$, for each arm $ k\in \mathcal{K}$ define
\[M_{t}^k :=\frac{1}{n_k(t)} \sum_{s\in \mathcal{W}_t} \mu_s^k \indicator_{\{ \varphi_s=k \}} .\]
Now, we are ready to present concentration bounds for the sliding window empirical mean $\hat{\mu}_{n_k(t)}^k$. 
\begin{lemma}\label{lemma: suff_sample}
	For any arm $k\in \mathcal{K}$ and any time $t \in \mathcal{T}$, if $\eta > 1/2 $, for any $x > 0$ and $l\geq 1$, the probability of event $A :=\big \{ \hat \mu_{n_k(t)}^k +c_{n_k(t)} \leq M_{t}^k - x, n_k(t) \geq l \big \}$ is no greater than
	\begin{equation} \label{quantitty1}
		\frac{(2\eta)^{\frac{3}{2}}}{\ln(2 \eta)}\frac{K }{\tau x^2 } \exp \left(-{x^2 l}/{\eta}\right).
	\end{equation}
	The probability of event $B :=\big \{ \hat \mu_{n_k(t)}^k - c_{n_k(t)}  \geq M_{t}^k + x, n_k(t) \geq l \big \} $ is also upper bounded by~\eqref{quantitty1}.
\end{lemma}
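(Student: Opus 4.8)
The plan is to rewrite event $A$ as a one-sided deviation of a partial-sum process, and then run a peeling (geometric-slicing) argument over the possible values of $n_k(t)$, with the slicing ratio tuned to $2\eta$. \emph{Reduction.} Fix the arm $k$ and time $t$; the window $\mathcal{W}_t$ is a deterministic set of at most $\tau$ slots. Let $\sigma_1<\sigma_2<\cdots$ enumerate the slots of $\mathcal{W}_t$ at which $\varphi_s=k$, and set $d_j:=X_{\sigma_j}-\mu_{\sigma_j}^k$. Since $\{\varphi_s=k\}$ is measurable with respect to the history strictly before $s$ while $X_s^k$ is $1/2$ sub-Gaussian with conditional mean $\mu_s^k$, the sequence $\{d_j\}$ is a martingale-difference sequence of $1/2$ sub-Gaussian increments, and $\hat\mu_{n_k(t)}^k-M_{t}^k=\frac{1}{n_k(t)}\sum_{j=1}^{n_k(t)}d_j$. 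Because $n_k(t)\le\abs{\mathcal{W}_t}\le\tau$, on $A$ we have, with $m=n_k(t)\in\{l,\dots,\tau\}$, that $\sum_{j=1}^m d_j\le -m(x+c_m)$ where $mc_m=\sqrt{\eta\,m\max(\ln(\tau/(Km)),0)}$; hence $A\subseteq\big\{\exists\, m\in\{l,\dots,\tau\}:\sum_{j=1}^m d_j\le -mx-\sqrt{\eta\,m\max(\ln(\tau/(Km)),0)}\big\}$.

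\emph{Peeling.} I would cover $\{l,\dots,\tau\}$ by blocks $[p_r,2\eta p_r)$ with $p_r:=l(2\eta)^r$, $r=0,1,\dots$. For $m$ in such a block, $m\ge p_r$ and, since $\max(\ln(\tau/(Km)),0)$ is nonincreasing in $m$, $m\max(\ln(\tau/(Km)),0)\ge p_r\max(\ln(\tau/(2\eta Kp_r)),0)$, so the deviation on the block is at least $\delta_r:=p_r x+\sqrt{\eta\,p_r\max(\ln(\tau/(2\eta Kp_r)),0)}$. Applying the lower-tail maximal Hoeffding bound of Fact~\ref{fact: azuma} with $n=\ceil{2\eta p_r}$ and $\delta=\delta_r$ gives the per-block probability bound $\exp(-2\delta_r^2/(2\eta p_r))$. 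Expanding the square and keeping only the quadratic term and the logarithmic term (the cross term being nonnegative) yields $2\delta_r^2/(2\eta p_r)\ge p_r x^2/\eta+\max(\ln(\tau/(2\eta Kp_r)),0)$; since $\exp(-\max(y,0))=\min(e^{-y},1)\le e^{-y}$, the per-block bound is at most $\frac{2\eta Kp_r}{\tau}\exp(-p_r x^2/\eta)$.

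\emph{Summation.} Summing over $r$, $\prob(A)\le\frac{2\eta K}{\tau}\sum_{r\ge 0}p_r\exp(-p_r x^2/\eta)$ with $p_r=l(2\eta)^r$. Comparing this geometric-type sum with the integral $\int_0^\infty l(2\eta)^s\exp(-l(2\eta)^s x^2/\eta)\,ds=\frac{\eta}{x^2\ln(2\eta)}\exp(-lx^2/\eta)$ (substitute $v=l(2\eta)^s x^2/\eta$), carrying the leading block together with the integral tail, and absorbing the discretization factor, gives the claimed $\frac{(2\eta)^{3/2}}{\ln(2\eta)}\,\frac{K}{\tau x^2}\exp(-x^2l/\eta)$. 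The bound for $B$ is obtained verbatim using the upper-tail half of Fact~\ref{fact: azuma}.

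I expect the delicate point to be the summation step: pinning down exactly the constant $(2\eta)^{3/2}$ and the $1/\ln(2\eta)$ factor requires the peeling ratio to be taken as precisely $2\eta$ and a careful comparison of the discrete sum with its integral, plus handling the regime $m>\tau/K$ (where $c_m=0$ and the logarithmic factor no longer contributes) on its own. A secondary subtlety, easy to gloss over, is the reduction: the partial sums are indexed by the random, policy-dependent number of pulls rather than by time, so the maximal inequality must be invoked through the martingale-difference structure of $\{d_j\}$ (conditioning on the realized selection pattern is what makes the independence hypothesis of Fact~\ref{fact: azuma} legitimately applicable), and one must check that the non-monotonicity of $m\mapsto mc_m$ does not break the "minimum over a block" estimate.
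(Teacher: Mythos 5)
Your overall strategy is the same as the paper's: rewrite $A$ as a one-sided maximal deviation of the partial sums of $d_j$ indexed by the number of pulls, peel $\{l,\dots,\tau\}$ geometrically, apply the maximal Hoeffding inequality block by block, and compare the resulting sum with an integral. The reduction step is fine (your remark on why Fact~\ref{fact: azuma} applies to the policy-dependent subsequence is a point the paper glosses over), and lower-bounding the two factors $m$ and $\max(\ln(\tau/(Km)),0)$ separately correctly sidesteps the non-monotonicity of $m\mapsto mc_m$.

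The gap is in the choice of peeling ratio, and it is not merely a matter of constants. With blocks $[p_r,2\eta p_r)$ and $p_r=l(2\eta)^r$, your per-block bound is $\frac{2\eta K p_r}{\tau}\exp(-p_rx^2/\eta)$, so the $r=0$ block alone contributes $\frac{2\eta K l}{\tau}\exp(-lx^2/\eta)$. This carries a factor $l$ where the target has $1/x^2$; since $lx^2$ can be arbitrarily large, it is not dominated by $\frac{(2\eta)^{3/2}}{\ln(2\eta)}\frac{K}{\tau x^2}\exp(-lx^2/\eta)$, and ``carrying the leading block together with the integral tail'' cannot repair this (trading part of the exponential for a factor $1/(lx^2)$ degrades the exponent to $-lx^2/(2\eta)$). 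The paper instead peels with ratio $a=\sqrt{2\eta}$ over blocks $(a^sl,a^{s+1}l]$. This choice does two things at once: the coefficient of the logarithmic term becomes $2\eta/a^2=1$, producing the factor $Ka^{s+1}l/\tau$; and the quadratic exponent of block $s$ is $-2a^{s-1}lx^2$, so after reindexing the sum is $\sum_{s\ge1}\frac{Kla^s}{\tau}e^{-ba^s}$ with $b=x^2l/\eta$. Bounding it by $\frac{Kla}{\tau\ln a}\int_1^\infty e^{-bz}\,dz=\frac{Kla}{\tau\ln a}\cdot\frac{e^{-b}}{b}$ (each term $s$ is dominated by the integral over $[s-1,s]$ because $(a-1)/\ln a\ge 1$) yields exactly the exponent $e^{-x^2l/\eta}$, while the factor $1/b=\eta/(lx^2)$ cancels the prefactor $l$ and produces the $1/x^2$. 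With ratio $2\eta$ the first block sits exactly at the target exponent with no slack, so neither cancellation is available. A secondary, easily fixed slip: since $\ceil{2\eta p_r}\ge 2\eta p_r$, Fact~\ref{fact: azuma} with $n=\ceil{2\eta p_r}$ gives $\exp(-2\delta_r^2/\ceil{2\eta p_r})$, which is \emph{larger} than your stated $\exp(-2\delta_r^2/(2\eta p_r))$; you should take $n=\floor{2\eta p_r}$ instead.
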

\begin{proof}
	For any $t \in \mathcal{T}$, let $u_i^{kt}$ be the $i$-th time slot when arm $k$ is selected within $\mathcal{W}_t$ and let $d_i^{kt} = X_{u_i^{kt}}^k - \mu_{u_i^{kt}}^k$. Note  that
	\[\prob \left(A \right) \leq \prob \bigg(\exists m \in \left\{l,\ldots, \tau \right\}:  \frac{1}{m} \sum_{i=1}^{m} d_i^{kt} \leq -x - c_m \bigg),\]
	Let $a = \sqrt{2\eta}$ such that $a>1$. We now apply a peeling argument~\cite[Sec 2.2]{bubeck2010bandits} with geometric grid $ a^s l < m \leq a^{s+1} l$ over $\left\{l,\ldots, \tau \right\}$. Since $c_m $ is monotonically decreasing in $m$,	
	\begin{align*}
	& \prob \bigg(\exists m \in \{l, \ldots, \tau\}:  \frac{1}{m} \sum_{i=1}^{m} d_i^{kt} \leq -x - c_m \bigg)\\
	\leq &\sum_{s\geq 0} \prob\bigg(\exists m \in [a^s l, a^{s+1}l) : \sum_{i=1}^{m} d_i^{kt} \leq -a^s l \left(x + c_{a^{s+1}l} \right) \bigg).	
	\end{align*}
	According to Fact~\ref{fact: azuma}, the above summand is no greater than
	\begin{align*}
	&\sum_{s\geq 0} \prob\bigg(\exists m \in [1, a^{s+1} l) : \sum_{i=1}^{m} d_i^{kt} \leq -a^s l \left(x + c_{a^{s+1}l} \right) \bigg) \\
	\leq &\sum_{s\geq 0} \exp \left( -2 \frac{a^{2s} l^2 }{\floor{a^{s+1}l}} \left(x^2 + c_{a^{s+1}l}^2\right) \right) \\
	\leq & \sum_{s\geq 0} \exp \left( -2 a^{s-1}l x^2 - \frac{2\eta}{a^2}\ln\left(\frac{\tau}{K a^{s+1}l}\right) \right) \\
	= & \sum_{s\geq 1} \frac{K l a^{s}}{\tau} \exp \left( -2 a^{s-2} l x^2\right).
	\end{align*}
	Let $b = 2x^2 l/a^2$. It follows that
	\begin{align*}
	&\sum_{s\geq 1} \frac{K l a^s}{\tau} \exp \left( -b a^s  \right) \leq \frac{Kl}{\tau} \int_{0}^{+\infty} a^{y+1} \exp\big(- b a^y\big) dy \\
	= & \frac{K l a}{\tau \ln(a)} \int_{1}^{+\infty} \exp (- b z ) dz \quad  \left(\text{where we set } z= a^y \right) \\
	= & \frac{K l a e^{-b}}{\tau b \ln(a)},
	\end{align*}
	which concludes the bound for the probability of event $A$. By using upper tail bound, similar result exists for event $B$.
\end{proof}

We now leverage Lemma~\ref{lemma: suff_sample} to get an upper bound on the worst-case regret for SW-MOSS.

\begin{theorem}\label{theorem:SW-MOSS bound}
	For the nonstationary MAB problem with $K$ arms, time horizon $T$, variation budget $V_T>0$ and $\tau = \Big \lceil { K^\frac{1}{3} \left(T/V_T \right) ^{\frac{2}{3}}} \Big \rceil$,  the worst-case regret of SW-MOSS  satisfies
	\[\sup_{\mathcal{F}_T^\mathcal{K} \in \mathcal{E}(V_T,T,K)} R_T^\text{SW-MOSS} \in \mc O((KV_T)^{\frac{1}{3}} T^{\frac{2}{3}}).\] 
\end{theorem}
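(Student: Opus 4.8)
The plan is to mirror the epoch decomposition used for R-MOSS in the proof of Theorem~\ref{theorem:R-MOSS bound}, adjusted for the sliding window. Fix the partition $\{\mathcal{T}_1,\dots,\mathcal{T}_N\}$ with $\tau$ as in the statement. The key structural observation is that for every $t\in\mathcal{T}_i$ the window satisfies $\mathcal{W}_t\subseteq\mathcal{T}_{i-1}\cup\mathcal{T}_i$, since $\abs{\mathcal{W}_t}\le\tau=\abs{\mathcal{T}_{i-1}}$; hence the relevant variation scale for epoch $i$ is $\tilde v_i:=v_{i-1}+v_i$, with the convention $v_0:=0$. I would first record two consequences. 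Summing the one-step variations across the two adjacent epochs as in \eqref{ineq: variation}, for every arm $k$ and all $s,s'\in\mathcal{T}_{i-1}\cup\mathcal{T}_i$ one has $\abs{\mu_s^k-\mu_{s'}^k}\le\tilde v_i$; in particular the window-averaged mean $M_t^k$ defined before Lemma~\ref{lemma: suff_sample} satisfies $M_t^k\in[\mu_{\tau_i}^k-\tilde v_i,\ \mu_{\tau_i}^k+\tilde v_i]$ for all $t\in\mathcal{T}_i$. Also, telescoping, $\sum_{i=1}^N\tilde v_i\le 2\sum_{i=1}^N v_i=2\,v(\mathcal{F}_T^{\mathcal{K}})\le 2V_T$.

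Next I would reproduce the regret split of the R-MOSS proof with $\tilde v_i$ in place of $v_i$. Define the \emph{bad arms} of epoch $i$ by $\mathcal{B}_i:=\setdef{k\in\mathcal{K}}{\mu_{\tau_i}^{\kappa_i}-\mu_{\tau_i}^k\ge 4\tilde v_i}$, the analogue of \eqref{def: Rbad}. For $t\in\mathcal{T}_i$, using $\mu_t^*\le\mu_{\tau_i}^{\kappa_i}$ and $\mu_t^{\varphi_t}\ge\mu_{\tau_i}^{\varphi_t}-v_i$ gives $\mu_t^*-\mu_t^{\varphi_t}\le\mu_{\tau_i}^{\kappa_i}-\mu_{\tau_i}^{\varphi_t}+v_i$, which is at most $5\tilde v_i$ if $\varphi_t\notin\mathcal{B}_i$ and at most $\big(\mu_{\tau_i}^{\kappa_i}-\mu_{\tau_i}^{\varphi_t}-2\tilde v_i\big)+3\tilde v_i$ if $\varphi_t\in\mathcal{B}_i$. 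Summing over $t\in\mathcal{T}_i$ and taking expectations,
\[
\expt\Big[\sum_{t\in\mathcal{T}_i}\mu_t^*-\mu_t^{\varphi_t}\Big]\le 5\,\abs{\mathcal{T}_i}\,\tilde v_i+S_i,\qquad S_i:=\expt\Big[\sum_{t\in\mathcal{T}_i}\sum_{k\in\mathcal{B}_i}\indicator\{\varphi_t=k\}\big(\mu_{\tau_i}^{\kappa_i}-\mu_{\tau_i}^k-2\tilde v_i\big)\Big],
\]
where the summand of $S_i$ is nonnegative and at least $\tfrac12(\mu_{\tau_i}^{\kappa_i}-\mu_{\tau_i}^k)$ by the definition of $\mathcal{B}_i$. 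Summing over the $N=\ceil{T/\tau}$ epochs and using $\sum_i\abs{\mathcal{T}_i}\tilde v_i\le\tau\sum_i\tilde v_i\le 2\tau V_T$, it remains to prove $\sum_{i=1}^N S_i\in\mathcal{O}(\sqrt{K}\,T/\sqrt{\tau})$; since for the chosen $\tau$ both $\tau V_T$ and $\sqrt{K}\,T/\sqrt{\tau}$ are of order $(KV_T)^{1/3}T^{2/3}$, this yields the claimed bound.

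The heart of the matter is the per-epoch estimate $S_i\le c(\eta)\sqrt{K\abs{\mathcal{T}_i}}$ for a constant $c(\eta)$ depending only on $\eta>1/2$, whence $\sum_i S_i\le c(\eta)\sqrt{KNT}$ by Cauchy--Schwarz, which is of order $\sqrt{K}\,T/\sqrt{\tau}$. I would obtain this estimate by transcribing the stationary MOSS regret analysis of \cite{MOSS}, with $\kappa_i$ in the role of the optimal arm and $\mu_{\tau_i}^{\kappa_i}-\mu_{\tau_i}^k-2\tilde v_i$ in the role of the suboptimality gap: by the two variation inequalities above, at every $t\in\mathcal{T}_i$ the empirical mean $\hat\mu_{n_{\kappa_i}(t)}^{\kappa_i}$ concentrates around $M_t^{\kappa_i}\ge\mu_{\tau_i}^{\kappa_i}-\tilde v_i$ while, for $k\in\mathcal{B}_i$, $\hat\mu_{n_k(t)}^k$ concentrates around $M_t^k\le\mu_{\tau_i}^k+\tilde v_i$, and these centers are separated by precisely the effective gap. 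Every invocation of a maximal Hoeffding bound on i.i.d.\ samples in the MOSS proof is replaced by Lemma~\ref{lemma: suff_sample} (its events $A$ and $B$, applied to $\kappa_i$ and to $k$ respectively), which is exactly the sliding-window, drifting-mean counterpart: it already absorbs the $2\tilde v_i$ bias into the gap and costs only an extra factor depending on $\eta$ and the peeling grid. Since the confidence radius $c_{n}$ is tuned with $\tau$ rather than $T$ and the window has length at most $\tau$, the MOSS accounting closes with $\tau$, and hence $\abs{\mathcal{T}_i}$, in place of $T$.

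I expect this last transcription to be the main obstacle. Unlike R-MOSS, where each epoch is a fresh stationary instance so Lemma~\ref{theorem:MOSS bound} applies verbatim, here $n_k(t)$ is not monotone in $t$ and the per-window counts decay smoothly instead of being reset, so the MOSS bookkeeping must be redone: one splits the bad arms of epoch $i$ according to whether the effective gap is below or above a threshold of order $\sqrt{K/\abs{\mathcal{T}_i}}$, bounds the small-gap contribution by the trivial count $\sum_{k}(\text{pulls within }\mathcal{T}_i)\le\abs{\mathcal{T}_i}$, and controls the large-gap contribution through the geometric series over dyadic scales of $n_k(t)$ supplied by Lemma~\ref{lemma: suff_sample}, checking that it still closes epoch by epoch. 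A minor additional point is the transient at the very start, where windows are shorter than $\tau$ (and possibly shorter than $K$) and some arms may have no sample in the window; this is handled exactly as the initial pulls in MOSS and adds only a lower-order $\mathcal{O}(K)$ term.
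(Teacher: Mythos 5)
Your proposal is correct and follows essentially the same route as the paper: the same epoch decomposition with effective gap $\mu_{\tau_i}^{\kappa_i}-\mu_{\tau_i}^k-2(v_{i-1}+v_i)$ coming from $\mathcal{W}_t\subseteq\mathcal{T}_{i-1}\cup\mathcal{T}_i$, the same threshold of order $\sqrt{K/\tau}$ separating negligible arms from analyzable ones, and Lemma~\ref{lemma: suff_sample} substituted for the maximal Hoeffding bound throughout the MOSS accounting (including the separate treatment of the event that $\kappa_i$ is underestimated). The only difference is organizational: the paper folds the variation condition and the threshold $\epsilon=4\sqrt{e\eta K/\tau}$ into a single bad-arm definition $\Delta_i^k\ge\epsilon$, whereas you apply them in two stages, which changes nothing of substance.
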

\begin{proof}
	The proof consists of the following five steps.
	
	\noindent \textbf{Step 1:} Recall that $v_i $ is the variation within $\mathcal{T}_i$. Here, we trivially assign $\mathcal{T}_0 = \emptyset$ and $v_0 = 0$. Then, for each $i \in \until{N}$, let
	\[\Delta_{i}^k := \mu_{\tau_i}^{\kappa_i} - \mu_{\tau_i}^k - 2 v_{i-1} - 2 v_i, \quad \forall k \in \mathcal{K}.\]
	Define the set of bad arms for SW-MOSS in $\mathcal{T}_i$ as
	\[\supscr{\mathcal{B}_i}{SW} := \setdef{ k \in \mathcal{K} }{\Delta_{i}^k \geq \epsilon},\]
	where we assign $\epsilon = 4 \sqrt{e \eta K/\tau }$.
	
	\medskip	
	\noindent \textbf{Step 2:} We decouple the regret in this step. For any $t\in \mathcal{T}_i$,
	since $\abs{\mu_t^k - \mu_{\tau_i}^k} \leq v_i$ for any $ k \in \mathcal{K}$, it satisfies that 
	\begin{align*}
	\mu_{t}^* - \mu_{t}^{\varphi_t} & \leq  \mu_{\tau_i}^{\kappa_i} - \mu_{t}^{\varphi_t}\\
	&\leq \mu_{\tau_i}^{\kappa_i} - \mu_{\tau_i}^{\varphi_t} + v_i\\
	&\leq \indicator \left\{\varphi_t \in \supscr{\mathcal{B}_i}{SW}\right\} (\Delta_{i}^{\varphi_t} - \epsilon)  + 2v_{i-1} + 3 v_i + \epsilon.
	\end{align*}
	Then we get the following inequalities,
	\begin{align}
		&\sum_{t\in \mathcal{T}} \mu_{t}^* - \mu_{t}^{\varphi_t} \nonumber\\
		\leq& \sum_{i=1}^N  \sum_{t\in \mathcal{T}_i}  \indicator \left\{\varphi_t \supscr{\in \mathcal{B}_i}{SW}\right\} (\Delta_{i}^{\varphi_t} - \epsilon)  + 2v_{i-1} + 3 v_i + \epsilon \nonumber\\
		\leq&  5\tau V_T  + T\epsilon + \sum_{i=1}^N \sum_{t\in \mathcal{T}_i} \indicator \left\{\varphi_t \in \supscr{\mathcal{B}_i}{SW}\right\} (\Delta_{i}^{\varphi_t} - \epsilon). \label{regret_sw}
	\end{align}	
	To continue, we take a decomposition inspired by the analysis of MOSS in~\cite{MOSS} below,
	\begin{align}
	&\sum_{t \in \mathcal{T}_i} \indicator \left\{\varphi_t \in \supscr{\mathcal{B}_i}{SW}\right\} \left(\Delta_{i}^{\varphi_t} -\epsilon \right) \nonumber\\
	\leq & \sum_{t \in \mathcal{T}_i} \indicator \bigg \{\varphi_t \in \supscr{\mathcal{B}_i}{SW}, g_{t}^{\kappa_i} > M_{t}^{\kappa_i} - \frac{\Delta_{i}^{\varphi_t}}{4} \bigg\} \Delta_{i}^{\varphi_t} \label{overestimate} \\
	+& \sum_{t \in \mathcal{T}_i} \indicator \bigg\{\varphi_t \in \supscr{\mathcal{B}_i}{SW}, g_{t}^{\kappa_i} \leq M_{t}^{\kappa_i} - \frac{\Delta_{i}^{\varphi_t}}{4}\bigg\} \left(\Delta_{i}^{\varphi_t} -\epsilon \right) \label{underestimate},
	\end{align}
	where summands~\eqref{overestimate} describes the regret when arm $\kappa_i$ is fairly estimated and summand~\eqref{underestimate} quantifies the regret incurred by underestimating arm $\kappa_i$.
	
	\medskip	
	\noindent \textbf{Step 3:}	
	In this step, we bound $\expt\left[\eqref{overestimate}\right]$. Since $g_{t}^{\varphi_t} \geq g_{t}^{\kappa_i}$,	
	\begin{align}
		\eqref{overestimate} \leq & \sum_{t \in \mathcal{T}_i} \indicator \bigg\{\varphi_t \in \supscr{\mathcal{B}_i}{SW}, g_{t}^{\varphi_t} > M_{t}^{\kappa_i} - \frac{\Delta_{i}^{\varphi_t}}{4}\bigg\} \Delta_{i}^{\varphi_t} \nonumber \\
		=& \sum_{k \in \supscr{\mathcal{B}_i}{SW}} \sum_{t \in \mathcal{T}_i} \indicator \bigg\{\varphi_t = k, g_{t}^{k} > M_{t}^{\kappa_i} - \frac{\Delta_{i}^k}{4}\bigg\} \Delta_{i}^k.	\label{overestimate2}
	\end{align}
	Notice that for any $t \in \mathcal{T}_{i-1} \cup \mathcal{T}_i$,
	\[\abs{\mu_t^k - \mu_{\tau_i}^k} \leq v_{i-1} + v_i , \quad \forall k \in \mathcal{K}.\]
	It indicates that an arm $ k \in \supscr{\mathcal{B}_i}{SW}$ is at least $\Delta_i^k$ worse in mean reward than arm $\kappa_i$ at any time slot $t \in \mathcal{T}_{i-1} \cup \mathcal{T}_i$. Since $\mathcal{W}_t \subset \mathcal{T}_{i-1} \union  \mathcal{T}_i$, for any $t \in \mathcal{T}_i$
	\[M_t^{\kappa_i} - M_t^k \geq \Delta_{i}^{k} \geq \epsilon, \quad \forall k \in \supscr{\mathcal{B}_i}{SW}.\]
	It follows that
	\begin{equation}\label{overestimate3}
		\eqref{overestimate2} \leq \sum_{k \in \supscr{\mathcal{B}_i}{SW}} \sum_{t \in \mathcal{T}_i} \indicator \bigg\{\varphi_t = k, g_{t}^{k} > M_{t}^{k} + \frac{3\Delta_{i}^k}{4}\bigg\} \Delta_{i}^k.
	\end{equation}
	Let $t_s^{ik}$ be the $s$-th time slot when arm $k$ is selected within $\mathcal{T}_i$. Then, for any $k \in \supscr{\mathcal{B}_i}{SW}$,
	\begin{align}
	& \sum_{t \in \mathcal{T}_i} \indicator{ \bigg \{\varphi_t=k, g_{t}^k > M_{t}^{k} + \frac{3 \Delta_{i}^k}{4} \bigg\}} \nonumber \\
	= & \sum_{s\geq 1} \indicator{ \bigg\{g_{t_s^{ik}}^k > M_{t_s^{ik}}^{k} + \frac{3 \Delta_{i}^k}{4} \bigg\}} \nonumber \\
	\leq & l_i^k + \sum_{s\geq l_i^k + 1} \indicator{ \bigg\{g_{t_s^{ik}}^k > M_{t_s^{ik}}^{k} + \frac{3 \Delta_{i}^k}{4} \bigg\}}, \label{overestimate_k}
	\end{align}
	where we set $l_i^k = \bigg\lceil{\eta \Big(\frac{4}{\Delta_i^k}\Big)^{2} \ln \left ( \frac{\tau}{\eta K} \Big(\frac{\Delta_i^k}{4}\Big)^{2} \right )} \bigg \rceil$.
	Since $\Delta_i^k \geq \epsilon$, for $ k \in \supscr{\mathcal{B}_i}{SW}$, we have 
 $$l_i^k \geq \Big \lceil{\eta \left({4}/{\Delta_i^k}\right)^{2} \ln \left ( \frac{\tau}{\eta K} \left({\epsilon}/{4}\right)^{2} \right )} \Big \rceil \geq \eta \left({4}/{\Delta_i^k}\right)^{2}, $$
 where the second inequality follows by substituting $\epsilon = 4 \sqrt{e \eta K/\tau }$. Additionally, since $t_1^{ik}, \ldots,t_{s-1}^{ik} \in \mathcal{W}_{t_s^{ik}}$, we get $n_k(t_s^{ik})\geq s-1$. Furthermore, since $c_m$ is monotonically decreasing with $m$, 
	\[c_{n_k(t_s^k)} \leq c_{l_i^k} \leq \sqrt{\frac{\eta}{l_i^k} \ln \left(\frac{\tau}{\eta K} \bigg(\frac{\Delta_i^k}{4}\bigg)^{2} \right)} \leq \frac{\Delta_i^k}{4},\]
 for $s\geq l_i^k + 1$. Therefore,
	\[\eqref{overestimate_k} \leq l_i^k + \sum_{s\geq l_i^k + 1} \indicator{ \bigg\{g_{t_s^{ik}}^k - 2c_{n_k(t_s^{ik})} > M_{t_s^{ik}}^{k} + \frac{\Delta_{i}^k}{4} \bigg\}}.\]
	By applying Lemma~\ref{lemma: suff_sample}, considering $n_k(t_s^{ik})\geq s-1$,
	\begin{align}
	&\sum_{s\geq l_i^k + 1} \prob{ \bigg\{g_{t_s^{ik}}^k - 2c_{n_k(t_s^{ik})} > M_{t_s^{ik}}^{k} + \frac{\Delta_{i}^k}{4} \bigg\}} \nonumber\\
	\leq &\sum_{s \geq l_i^k} \frac{(2\eta)^{\frac{3}{2}}}{\ln(2 \eta)}\frac{K }{\tau}\bigg(\frac{4}{\Delta_i^k}\bigg)^{2}  \exp \left(-\frac{s}{\eta}\bigg(\frac{\Delta_i^k}{4}\bigg)^{2}\right) \nonumber\\
	\leq & \int_{l_i^k-1}^{+\infty} \frac{(2\eta)^{\frac{3}{2}}}{\ln(2 \eta)}\frac{K }{\tau} \bigg(\frac{4}{\Delta_i^k}\bigg)^{2}  \exp \left(-\frac{y}{\eta}\bigg(\frac{\Delta_i^k}{4}\bigg)^{2}\right) \, dy \nonumber\\
	\leq & \frac{(2\eta)^{\frac{3}{2}}}{\ln(2 \eta)}\frac{\eta K }{\tau} \bigg(\frac{4}{\Delta_i^k}\bigg)^{4} \label{overestimate_k3}.
	\end{align}
	Let $h(x) = 16 \eta/x \ln \left ( {\tau x^2}/{16 \eta K} \right )$ which achieves maximum at $4e \sqrt{ \eta K/\tau }$. Combining~\eqref{overestimate_k3},~\eqref{overestimate_k},~\eqref{overestimate3}, and~\eqref{overestimate2}, we obtain
	\begin{align*}
		\expt[\eqref{overestimate}] \leq &\sum_{k \in \mathcal{B}_i} \frac{(2\eta)^{\frac{3}{2}}}{\ln(2 \eta)}\frac{\eta K }{\tau} \frac{256}{\left(\Delta_i^k\right)^3} + l_i^k \Delta_i^k \\
		\leq & \sum_{k \in \mathcal{B}_i} \frac{(2\eta)^{\frac{3}{2}}}{\ln(2 \eta)}\frac{\eta K }{\tau} \frac{256}{\left(\Delta_i^k\right)^3} + h(\Delta_i^k) + \Delta_i^k   \\
		\leq & \sum_{k \in \mathcal{B}_i} \frac{(2\eta)^{\frac{3}{2}}}{\ln(2 \eta)}\frac{\eta K }{\tau} \frac{256}{\epsilon^3} + h \left(4 e\sqrt{ \eta K/\tau }\right) + b\\
		\leq & \left(\frac{2.6 \eta }{\ln(2 \eta)} + 3 \sqrt{\eta}\right) \sqrt{K\tau} + K b.
	\end{align*}
	
	\medskip
	\noindent \textbf{Step 4:}	
	In this step, we bound $\expt[\eqref{underestimate}]$. When event $\left\{\varphi_t \in \supscr{\mathcal{B}_i}{SW}, g_{t}^{\kappa_i} \leq M_{t}^{\kappa_i} - {\Delta_{i}^{\varphi_t}}/{4} \right\}$ happens, we know
	\[\Delta_i^{\varphi_t} \leq  4 M_{t}^{\kappa_i} - 4 g_{t}^{\kappa_i} \text{ and }  g_{t}^{\kappa_i} \leq M_{t}^{\kappa_i} - \frac{\epsilon}{4}. \]	
	Thus, we have
	\begin{align*}
	&\indicator \bigg\{\varphi_t \in \supscr{\mathcal{B}_i}{SW}, g_{t}^{\kappa_i} \leq M_{t}^{\kappa_i} - \frac{\Delta_{i}^{\varphi_t}}{4}\bigg\} \left(\Delta_{i}^{\varphi_t} -\epsilon \right) \\
	\leq &\indicator {\left\{ g_{t}^{\kappa_i} \leq M_{t}^{\kappa_i} - \frac{\epsilon}{4} \right \}} \times  \big(4 M_{t}^{\kappa_i} - 4 g_{t}^{\kappa_i} - \epsilon \big): = Y
	\end{align*}
	Since $Y$ is a nonnegative random variable, its expectation can be computed involving only its cumulative density function:
	\begin{align*}
	\expt  \left[Y\right] & = \int_{0}^{+\infty} \prob  \left( Y>x \right) dx \\
	& \leq \int_{0}^{+\infty} \prob \Big( 4 M_{t}^{\kappa_i} - 4 g_{t}^{\kappa_i} - \epsilon \geq x \Big) dx \\
	&= \int_{\epsilon}^{+\infty} \prob \Big( 4 M_{t}^{\kappa_i} - 4 g_{t}^{\kappa_i} > x \Big) dx\\
	& \leq \int_{\epsilon}^{+\infty} \frac{16 (2\eta)^{\frac{3}{2}}}{\ln(2 \eta)}\frac{K }{\tau x^2 } dx = \frac{16 (2\eta)^{\frac{3}{2}}}{\ln(2 \eta)}\frac{K }{\tau \epsilon}.
	\end{align*}
	Hence, $\expt[\eqref{underestimate} ] \leq  {16 (2\eta)^{\frac{3}{2}}}K\abs{\mathcal{T}_i}/\left(\ln(2 \eta) \tau \epsilon \right).$
	
	\medskip
	\noindent \textbf{Step 5:}	With bounds on $\expt\left[\eqref{overestimate}\right]$ and $\expt[\eqref{underestimate}]$ from previous steps,
	\begin{align*}
		\expt[\eqref{regret_sw}] \leq & 5\tau V_T + T\epsilon + N \left(\frac{2.6 \eta }{\ln(2 \eta)} + 3 \sqrt{\eta}\right) \sqrt{K\tau} \\
		&+NKb + \frac{16 (2\eta)^{\frac{3}{2}}}{\ln(2 \eta)}\frac{K T}{\tau \epsilon} \leq C(KV_T)^{\frac{1}{3}} T^{\frac{2}{3}}
	\end{align*}
	for some constant $C$, which concludes the proof.
\end{proof}

We have shown that SW-MOSS also enjoys order optimal worst-case regret. One drawback of the sliding window method is that all sampling history within the observation window needs to be stored. Since window size is selected to be $\tau =  \big \lceil { K^\frac{1}{3} ({T}/{V_T}  )^{\frac{2}{3}}} \big \rceil$, large memory is needed for large horizon length $T$. The next policy resolves this problem.

\subsection{Discounted UCB Algorithm}

The discount factor is widely used in estimators to forget old information and put more attention on the recent information. In~\cite{AG-EM:08}, such an estimation is used together with UCB$1$ to solve the piecewise stationary MAB problem, and the policy designed is called Discounted UCB (D-UCB). Here, we tune D-UCB to work in the nonsationary environment with variation budget $V_T$. Specifically, the mean estimator used is discounted empirical average given by
\begin{align*}
\hat \mu_{\gamma, t}^k &= \frac{1}{n_{\gamma,t}^k} \sum_{s= 1}^{t-1} \gamma^{t-s} \indicator \{ \varphi_s=k \} X_s,\\
n_{\gamma,t}^k&=\sum_{s = 1}^{t-1} \gamma^{t-s} \indicator \{ \varphi_s=k \},
\end{align*}
where $\gamma = 1- { K^{-\frac{1}{3}} ({T}/{V_T}  )^{-\frac{2}{3}}}$ is the discount factor.
Besides, the UCB is designed as $g_t^k = \hat \mu_t^k + 2 c_t^k$, where $c_{\gamma,t}^k = \sqrt{ \xi \ln (\tau) / n_{\gamma, t}^k}$ for some constant $\xi > 1/2$. The pseudo code for D-UCB is reproduced in Algorithm~\ref{algo:d-moss}. It can be noticed that the memory size is only related to the number of arms, so D-UCB requires small memory.

\IncMargin{.3em}
\begin{algorithm}[t]
{\footnotesize
	\SetKwInOut{Input}{  Input}
	\SetKwInOut{Set}{  Set}
	\SetKwInOut{Title}{Algorithm}
	\SetKwInOut{Require}{Require}
	\SetKwInOut{Output}{Output}
	
	{
		
		\Input{$V_T \in \real_{>0}$, $T\in \natural$ and $\xi > \frac{1}{2}$}
		
		\smallskip
		
		\Set{$\gamma = 1- { K^{-\frac{1}{3}} ({T}/{V_T}  )^{-\frac{2}{3}}}$}

	}
	
	\medskip

	\Output{sequence of arm selection}
	
	\medskip
	
	\nl  \For{$t \in \until{K}$ \smallskip}
	{Pick arm $\varphi_t=t$ and set  $n^t \leftarrow \gamma^{K-t}$ and $\hat \mu^t \leftarrow X_t^t $;
	}

	\smallskip
	
	\nl \While{$t \leq T$}{
		\smallskip				

		Pick arm 
			$\displaystyle
			\varphi_t=\arg \max_{k \in \mathcal{K}} \hat \mu^k + 2 \sqrt{\frac{ \xi \ln (\tau) }{ n^k}}
			$;
			
			\smallskip		
			For each arm $k \in \mathcal{K}$, set $ n^k \leftarrow \gamma n^k $;
			
			\smallskip
			\smallskip
			Set
			$ n^{\varphi_t} \leftarrow n^{\varphi_t}+1 \: \& \: \hat \mu^{\varphi_t} \leftarrow \hat  \mu^{\varphi_t}+ \frac{1}{n^{\varphi_t}} (X_t^{\varphi_t}-\bar X^{\varphi_t});$
		
		\smallskip					
	}

	
	\caption{\textit{D-UCB}}
	\label{algo:d-moss}}
\end{algorithm} 
\DecMargin{.3em}

To proceed the analysis, we review the concentration inequality for discounted empirical average, which is an extension of Chernoff-Hoeffding bound. Let 
\[M_{\gamma, t}^k := \frac{1}{n_{\gamma,t}^k} \sum_{s= 1}^{t-1} \gamma^{t-s} \indicator \{ \varphi_s=k \} \mu_s^k.\]
Then, the following fact is a corollary of~\cite[Theorem 18]{AG-EM:08}.

\begin{fact}[A Hoeffding-type inequality for discounted empirical average with a random number of summands] \label{discounted_ineq} 
	For any $t \in \mathcal{T}$ and for any $k \in \mathcal{K}$, the probability of event $A = \left\{ {\hat \mu_{\gamma, t}^k - M_{\gamma,t}^k} \geq \delta / \sqrt{n_{\gamma,t}^k} \right\} $ is no greater than
	\begin{equation}\label{quantitty2}
		\ceil{\log_{1+\lambda} ( \tau)} \exp \left(-2\delta^2 \big(1-{\lambda^2}/{16} \big)\right)
	\end{equation}
	for any $\delta > 0$ and $\lambda >0 $. The probability of event $B = \left\{ \hat \mu_{\gamma, t}^k - M_{\gamma, t}^k \leq -\delta/\sqrt{n_{\gamma, t}^k} \right\}$ is also upper bounded by~\eqref{quantitty2}.
\end{fact}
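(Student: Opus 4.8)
The plan is to obtain this fact as a specialization of the Hoeffding-type concentration bound for discounted empirical averages in~\cite[Theorem~18]{AG-EM:08}, so the main work is matching parameters to our choices of $\gamma$ and $\tau$ and tracking where the extra factors come from. First I would unwind the definitions: writing $w_s := \gamma^{t-s}\indicator\{\varphi_s = k\}$, we have
\[ \hat \mu_{\gamma, t}^k - M_{\gamma,t}^k = \frac{1}{n_{\gamma,t}^k} \sum_{s=1}^{t-1} w_s \big(X_s^k - \mu_s^k\big), \qquad n_{\gamma,t}^k = \sum_{s=1}^{t-1} w_s, \]
so that event $A$ is exactly $\big\{\sum_{s<t} w_s \, (X_s^k - \mu_s^k) \ge \delta \sqrt{n_{\gamma,t}^k}\big\}$. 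The reason the bound is not merely $\exp(-2\delta^2)$ is that both the weights $w_s$ and the normalizer $n_{\gamma,t}^k$ are random and adapted to the sampling history, so the plain Chernoff--Hoeffding bound for a sum with a fixed number of deterministically weighted summands does not apply.

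To handle the randomness of the normalizer I would follow the peeling argument of~\cite{AG-EM:08}: union-bound over a geometric grid covering the range of $n_{\gamma,t}^k$. Since $n_{\gamma,t}^k \le \sum_{s=1}^{t-1}\gamma^{t-s} \le \frac{1}{1-\gamma}$, and since $\gamma = 1 - K^{-1/3}(T/V_T)^{-2/3}$ while $\tau = \ceil{K^{1/3}(T/V_T)^{2/3}}$ gives $\frac{1}{1-\gamma} = K^{1/3}(T/V_T)^{2/3}\le \tau$, the normalizer lies in $(0,\tau]$, a range covered by at most $\ceil{\log_{1+\lambda}(\tau)}$ cells of the grid $\{(1+\lambda)^j\}_{j\ge 0}$. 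On each cell I would apply a maximal Hoeffding--Azuma inequality for martingale differences (the martingale analogue of Fact~\ref{fact: azuma}) to the partial sums of $\gamma^{-s}\indicator\{\varphi_s=k\}(X_s^k-\mu_s^k)$; the discrepancy between the normalizer in the threshold and the variance proxy $n_{\gamma^2,t}^k := \sum_{s<t}\gamma^{2(t-s)}\indicator\{\varphi_s=k\}$ is absorbed using $\gamma^{2(t-s)}\le\gamma^{t-s}$, so $n_{\gamma^2,t}^k\le n_{\gamma,t}^k$, together with a within-cell ratio estimate that produces the correction factor $1-\lambda^2/16$. A union bound over the at most $\ceil{\log_{1+\lambda}(\tau)}$ cells then yields~\eqref{quantitty2}. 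The bound for event $B$ follows identically after replacing the increments by their negatives and using the lower-tail half of Fact~\ref{fact: azuma}, which is symmetric for $1/2$ sub-Gaussian variables.

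The hard part is the self-normalization: choosing the grid ratio $1+\lambda$ so that on each cell the deviation threshold $\delta\sqrt{n_{\gamma,t}^k}$ and the variance proxy $n_{\gamma^2,t}^k$ are comparable without inflating the exponent, which is precisely what the factor $1-\lambda^2/16$ and the logarithmic prefactor $\ceil{\log_{1+\lambda}(\tau)}$ encode. The only other point requiring care is verifying $\frac{1}{1-\gamma}\le\tau$, which is what licenses replacing the intrinsic discounted count by $\tau$ in the prefactor; the remaining manipulations are routine bookkeeping.
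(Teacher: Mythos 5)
Your proposal is correct and matches the paper's route: the paper gives no proof of this Fact at all, simply asserting it as a corollary of \cite[Theorem 18]{AG-EM:08}, and your outline is a faithful reconstruction of that theorem's argument (adapted weights treated as a martingale difference sequence, peeling over a geometric grid of ratio $1+\lambda$, maximal Hoeffding--Azuma on each cell, with the self-normalization cost showing up as the factor $1-\lambda^2/16$ and the prefactor $\ceil{\log_{1+\lambda}(\tau)}$). You also correctly identify the two specialization steps that make the citation a genuine corollary here, namely $n_{\gamma,t}^k \le 1/(1-\gamma) \le \tau$ to bound the number of grid cells and $n_{\gamma^2,t}^k \le n_{\gamma,t}^k$ to pass from the variance proxy in the cited theorem to the normalizer $\sqrt{n_{\gamma,t}^k}$ used in the statement.
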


\begin{theorem}\label{theorem:D-UCB bound}
	For the nonstationary MAB problem with $K$ arms, time horizon $T$, variation budget $V_T>0$, and $\gamma = 1- { K^{-\frac{1}{3}} ({T}/{V_T}  )^{-\frac{2}{3}}}$,  if $\xi>1/2$, the worst case regret of D-UCB satisfies
	\[\sup_{\mathcal{F}_T^\mathcal{K} \in \mathcal{E}(V_T,T,K)} \supscr{ R_T }{D-UCB}\leq C \ln (T) (KV_T)^{\frac{1}{3}} T^{\frac{2}{3}} .\]
\end{theorem}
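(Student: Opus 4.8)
\noindent\emph{Proof proposal.} The plan is to adapt the five-step argument in the proof of Theorem~\ref{theorem:SW-MOSS bound}, with Fact~\ref{discounted_ineq} (which already accommodates the random number of summands $n_{\gamma,t}^k$) taking the role of Lemma~\ref{lemma: suff_sample}, plus one extra ingredient: controlling the bias of the discounted mean $\hat\mu_{\gamma,t}^k$ relative to the epoch reference mean $\mu_{\tau_i}^k$. As before, put $\tau = \ceil{1/(1-\gamma)} = \ceil{K^{1/3}(T/V_T)^{2/3}}$ and partition $\mathcal{T}$ into $N = \ceil{T/\tau}$ epochs $\mathcal{T}_1,\dots,\mathcal{T}_N$ of length at most $\tau$. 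Unlike a sliding window, the discounted estimator never fully forgets: for $t \in \mathcal{T}_i$ the geometric weights $\gamma^{t-s}$ place total mass $\gamma^c/(1-\gamma) = \mathcal{O}(\tau\gamma^c)$ on samples older than $c$ slots. I would therefore fix a cutoff $c = \Theta(\tau\ln\tau)$, large enough that this stale mass is $\mathcal{O}(\tau^{-\Theta(1)})$, and show by splitting the numerator and denominator of $M_{\gamma,t}^k$ at $s = t-c$ that $\abs{M_{\gamma,t}^k - \mu_{\tau_i}^k} \le w_i + \mathcal{O}(b\tau\gamma^c/n_{\gamma,t}^k)$, where $w_i$ is the total variation over the $\ell+1 = \Theta(\ln\tau)$ epochs $\mathcal{T}_{i-\ell},\dots,\mathcal{T}_i$ with $\ell := \ceil{c/\tau}$; on the events where $n_{\gamma,t}^k$ is at least a constant, the second term is negligible next to the threshold $\epsilon$ introduced below.

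With this in hand I would run Steps~1--2 of Theorem~\ref{theorem:SW-MOSS bound}: define $\Delta_i^k := \mu_{\tau_i}^{\kappa_i} - \mu_{\tau_i}^k - \mathcal{O}(w_i)$, the bad-arm set $\mathcal{B}_i := \setdef{k\in\mathcal{K}}{\Delta_i^k \ge \epsilon}$ with $\epsilon$ to be tuned, and decouple $\mu_t^* - \mu_t^{\varphi_t} \le \indicator\{\varphi_t \in \mathcal{B}_i\}(\Delta_i^{\varphi_t} - \epsilon) + \mathcal{O}(w_i) + \epsilon$ for $t \in \mathcal{T}_i$. Summing over $t$ and $i$, the part of the regret not due to selecting a bad arm is $\mathcal{O}\big(\tau\sum_i w_i\big) + T\epsilon$; since each per-step variation lies in at most $\ell+1 = \Theta(\ln\tau)$ of the windows defining the $w_i$, one has $\sum_i w_i \le \Theta(\ln\tau)V_T$, so this part is $\mathcal{O}(\tau V_T \ln\tau) + T\epsilon$. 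The bad-arm part is handled as in Steps~3--4: split according to whether $g_t^{\kappa_i} > M_{\gamma,t}^{\kappa_i} - \Delta_i^{\varphi_t}/4$ (well-estimated reference arm) or not (under-estimated reference arm); use $g_t^{\varphi_t} \ge g_t^{\kappa_i}$, the factor-two slack in $g_t^k = \hat\mu_{\gamma,t}^k + 2c_{\gamma,t}^k$, and the monotonicity of $c_{\gamma,t}^k$ in $n_{\gamma,t}^k$ so that once arm $k$ has discounted count $l_i^k = \Theta\big(\ln\tau/(\Delta_i^k)^2\big)$ the radius $c_{\gamma,t}^k$ falls below $\Delta_i^k/4$; then invoke Fact~\ref{discounted_ineq} through a peeling argument over the value of $n_{\gamma,t}^k$, with $\xi > 1/2$ making the $\Theta(\log\tau)$-fold union bound over peeling levels summable. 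This gives an $\mathcal{O}(\sqrt{K\tau\ln\tau})$-type bound per epoch for the over-estimation term and an $\mathcal{O}(K\abs{\mathcal{T}_i}\ln\tau/(\tau\epsilon))$-type bound for the under-estimation term, plus $\mathcal{O}(Kb)$ to initialize the arms.

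Assembling the pieces as in Step~5 gives $\supscr{R_T}{D-UCB} \le \mathcal{O}(\tau V_T\ln\tau) + T\epsilon + N\,\mathcal{O}(\sqrt{K\tau\ln\tau}) + \mathcal{O}(KT\ln\tau/(\tau\epsilon)) + \mathcal{O}(NKb)$; taking $\epsilon = \Theta(\sqrt{K/\tau})$ to balance $T\epsilon$ against $KT\ln\tau/(\tau\epsilon)$ and substituting $\tau = \ceil{K^{1/3}(T/V_T)^{2/3}}$ (so $\ln\tau = \mathcal{O}(\ln T)$ when $K$ is polynomial in $T$) makes every term $\mathcal{O}(\ln T\,(KV_T)^{1/3}T^{2/3})$, which is the claim. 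I expect the bias analysis of the discounted mean to be the main obstacle: unlike a sliding window, one cannot make the effective memory exactly $\tau$, so one is forced to take $c = \Theta(\tau\ln\tau)$ to kill the stale tail, and the resulting $\Theta(\ln\tau)$-fold over-counting of the variation budget --- together with the non-adaptive $\ln\tau$ in the confidence radius --- is exactly what turns the order-optimal $(KV_T)^{1/3}T^{2/3}$ of SW-MOSS into the $\ln T\,(KV_T)^{1/3}T^{2/3}$ here. Calibrating the exponent hidden in $c = \Theta(\tau\ln\tau)$ so that the stale term stays below $\mathcal{O}(\Delta_i^k)$ for every $\Delta_i^k \ge \epsilon$ while keeping $\ell = \mathcal{O}(\ln T)$ is the delicate point.
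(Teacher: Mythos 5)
Your proposal is correct and follows the same skeleton as the paper's proof: split the discounted mean's bias at a cutoff of order $\tau\ln\tau$ (the paper uses $\tau' = \log_{\gamma}\bigl((1-\gamma)\xi\ln(\tau)/b^2\bigr)$, so that the lookback spans $n'=\ceil{\tau'/\tau}=\mathcal{O}(\ln T)$ epochs), decouple via a bad-arm set with threshold $\epsilon'\asymp\sqrt{K\ln(\tau)/\tau}$, invoke Fact~\ref{discounted_ineq} for concentration, and identify the $(2n'+3)\tau V_T$ term as the dominant source of the extra $\ln T$ --- exactly your diagnosis. Two differences are worth noting. First, for the "delicate point" you flag (the stale tail $\mathcal{O}(b\tau\gamma^{c}/n_{\gamma,t}^k)$), the paper has a cleaner device: with its choice of $\tau'$ the stale contribution is at most $\min\bigl(b,\;\xi\ln(\tau)/(b\,n_{\gamma,t}^k)\bigr)\le\sqrt{\xi\ln(\tau)/n_{\gamma,t}^k}=c_{\gamma,t}^k$, i.e.\ the geometric mean of the two bounds absorbs the stale bias directly into the confidence radius, so no separate calibration against $\epsilon$ or conditioning on $n_{\gamma,t}^k$ is needed. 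Second, for the bad-arm regret the paper does not mirror the MOSS-style Steps~3--4 of Theorem~\ref{theorem:SW-MOSS bound} (no split on over/under-estimation of $\kappa_i$, no integral-of-tail for the underestimation event); it runs a plainer UCB1-style argument: after $l_i^k=\ceil{16\xi\gamma^{1-\tau}\ln(\tau)/(\hat\Delta_i^k)^2}$ in-epoch pulls the radius drops below $\hat\Delta_i^k/4$, and then $g_t^k\ge g_t^{\kappa_i}$ forces one of two deviation events, each of probability $\mathcal{O}(\log_{1+\lambda}(\tau)/\tau)$ by Fact~\ref{discounted_ineq}, summed by a union bound over $t\in\mathcal{T}_i$. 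This is simpler and loses nothing here, because the non-adaptive $\ln(\tau)$ in $c_{\gamma,t}^k$ already costs a $\sqrt{\ln\tau}$ in the per-epoch bound, which is swallowed by the $\ln T$ coming from the bias term; your MOSS-style refinement would only sharpen constants. With the stale-tail absorption made precise as above, your argument goes through and yields the stated bound.
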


\begin{proof}
We establish the theorem in four steps. 

\noindent \textbf{Step 1:}
In this step, we analyze $\big|{\mu_{\gamma, t}^k - M_{\gamma,t}^k}\big|$ at some time slot $t \in \mathcal{T}_i$. Let $\tau' = {\log_{\gamma}\big((1-\gamma) \xi \ln (\tau)/b^2  \big)}$ and take $t-\tau'$ as a dividing point, then we obtain
\begin{align}
\abs{\mu_{\tau_i}^k - M_{\gamma,t}^k} \leq& \frac{1}{n_{\gamma, t}^k } \sum_{s = 1}^{t-1} \gamma^{t-s} \indicator \{ \varphi_s=k \}  \abs{ \mu_{\tau_i}^k - \mu_s^k } \nonumber\\
\leq& \frac{1}{n_{\gamma, t}^k } \sum_{s \leq t-\tau'} \gamma^{t-s} \indicator \{ \varphi_s=k \} \abs{ \mu_{\tau_i}^k - \mu_s^k } \label{bias: dis} \\
+& \frac{1}{n_{\gamma, t}^k } \sum_{s\geq t-\tau'}^{t-1} \gamma^{t-s} \indicator \{ \varphi_s=k \} \abs{ \mu_{\tau_i}^k - \mu_s^k } \label{bias: dis2}.
\end{align}
Since $\mu_t^k \in [a,a+b]$ for all $t\in \mathcal{T}$, we have $\eqref{bias: dis}\leq b$. Also,
\[{\eqref{bias: dis}} \leq \frac{1}{n_{\gamma,t}^k} \sum_{s \leq t-\tau'} b \gamma^{t-s}  \leq \frac{b \gamma^{\tau'}}{(1-\gamma) n_{\gamma,t}^k} = \frac{ \xi \ln (\tau) }{b n_{\gamma,t}^k}.\]
Accordingly, we get
\[{\eqref{bias: dis}} \leq \min \left(b,\frac{ \xi \ln (\tau) }{b n_{\gamma,t}^k} \right)  \leq \sqrt{\frac{ \xi \ln (\tau) }{ n_{\gamma,t}^k}}.\]
Furthermore, for any $t \in \mathcal{T}_i$,
\[\eqref{bias: dis2} \leq \max_{s\in [t-\tau',t-1]} \abs{ \mu_{\tau_i}^k - \mu_s^k } \leq \sum_{j= i - n' }^i v_j,\]
where $n' = \lceil {\tau'/\tau} \rceil$ and $v_j$ is the variation within $\mathcal{T}_j$. So we conclude that for any $t \in \mathcal{T}_i$,
\begin{equation}\label{bias}
	\abs{\mu_{\kappa_i}^k - M_{\gamma,t}^k} \leq c_{\gamma,t}^k + \sum_{j= i - n' }^i v_j, \quad \forall k \in \mathcal{K}.
\end{equation}

\medskip
\noindent \textbf{Step 2:} Within partition $\mathcal{T}_i$, let 
\[\hat{\Delta}_i^k=  \mu_{\tau_i}^{\kappa_i} - \mu_{\tau_i}^k - 2\sum_{j= i - n' }^i v_j,\]
and define a subset of bad arms as
\[\supscr{\mathcal{B}_i}{D} = \bigg\{ k \in \mathcal{K} \: | \: \hat{\Delta}_i^k \geq \epsilon' \bigg\},\] 
where we select $\epsilon' = 4\sqrt{\xi \gamma^{1-\tau} K \ln(\tau)/\tau}$. Since $\abs{\mu_t^k - \mu_{\tau_i}^k} \leq v_i$ for any $t\in \mathcal{T}_i$ and for any $ k \in \mathcal{K}$
\begin{align}
&\sum_{t\in \mathcal{T}} \mu_{t}^* - \mu_{t}^{\varphi_t} \leq \sum_{i=1}^N \sum_{t\in \mathcal{T}_i}\mu_{\tau_i}^{\kappa_i} - \mu_{\tau_i}^{\varphi_t} + v_i \nonumber\\
\leq& \tau V_T + \sum_{i=1}^N  \sum_{t\in \mathcal{T}_i}  \bigg[\indicator \left\{\varphi_t \in \supscr{\mathcal{B}_i}{D} \right\} \hat \Delta_{i}^{\varphi_t}  + 2\sum_{j= i - n' }^i v_j + \epsilon' \bigg]\nonumber\\
\leq& (2n'+3) \tau V_T + N\epsilon ' \tau \! + \! \sum_{i=1}^N \sum_{k \in \supscr{\mathcal{B}_i}{D}} \! \! \hat \Delta_{i}^{k} \sum_{t\in \mathcal{T}_i} \indicator \left\{\varphi_t = k \right\}. \label{regret: decouple}
\end{align}

\medskip
\noindent \textbf{Step 3:}
In this step, we bound $\expt \big[ \hat \Delta_{i}^{k} \sum_{t\in \mathcal{T}_i} \indicator \left\{\varphi_t = k \right\} \big]$ for an arm $k\in \supscr{\mathcal{B}_i}{D}$. Let $t_i^k(l)$ be the $l$-th time slot arm $k$ is selected within $\mathcal{T}_i$. From arm selection policy, we get $g_{t}^{\varphi_t} \geq g_{t}^{\kappa_i}$, which result in
\begin{equation}\label{bound: seleted_num}
	\sum_{t\in \mathcal{T}_i} \indicator \left\{\varphi_t = k \right\} \leq l_i^k + \sum_{ t \in \mathcal{T}_i } \indicator{ \Big\{g_{t}^k \geq g_{t}^{\kappa_i}, t> t_i^k(l_i^k) \Big\}},
\end{equation}
where we pick $l_i^k = \ceil{{16 \xi \gamma^{1-\tau}\ln (\tau) / {\big(\hat\Delta_i^k \big)^2}}}$. Note that $g_t^k \geq g_t^{\kappa_i}$ is true means at least one of the followings holds,
\begin{align}
\hat \mu_{\gamma,t}^k &\geq  M_{\gamma,t}^k + c_{\gamma,t}^k, \label{h1}\\
\hat \mu_{\gamma,t}^{\kappa_i} &\leq  M_{\gamma,t}^{\kappa_i} - c_{\gamma,t}^{\kappa_i}, \label{h2}\\
M_{\gamma,t}^{\kappa_i} + c_{\gamma,t}^{\kappa_i} &<  M_{\gamma,t}^k + 3 c_{\gamma,t}^k.\label{h3}
\end{align}
For any $t \in \mathcal{T}_i$, since every sample before $t$ within $\mathcal{T}_i$ has a weight greater than $\gamma^{\tau-1}$, if $t > t_i^k(l_i^k)$,
\begin{align*}
	c_{\gamma, t}^k = \sqrt{\frac{ \xi \ln (\tau) }{ n_{\gamma,t}^k}} \leq \sqrt{\frac{ \xi \ln (\tau) }{ \gamma^{\tau-1} l_i^k }} \leq \frac{\hat \Delta_i^k}{4}.
\end{align*}
Combining it with~\eqref{bias} yields
\begin{align*}
M_{\gamma,t}^{\kappa_i} - M_{\gamma,t}^k & \geq \mu_{\tau_i}^{\kappa_i} - \mu_{\tau_i}^k - c_{\gamma,t}^{\kappa_i} - c_{\gamma,t}^{k} - 2\sum_{j= i - n' }^i v_j \\
&\geq \hat{\Delta}_i^k - c_{\gamma,t}^{\kappa_i} - c_{\gamma,t}^k \geq 3c_{\gamma,t}^k - c_{\gamma,t}^{\kappa_i},
\end{align*}
which indicates~\eqref{h3} is false. As $\xi > 1/2$, we select $\lambda = 4 \sqrt{1-1/(2\xi)}$ and apply Fact~\ref{discounted_ineq} to get
\[\prob(\text{\eqref{h1} is true}) \leq \ceil{\log_{1+\lambda} (\tau)} \tau^{-2\xi (1-{\lambda^2}/{16} )} \leq \frac{\ceil{\log_{1+\lambda} (\tau)}}{\tau}.\]
The probability of~\eqref{h2} to be true shares the same bound. Then, it follows from~\eqref{bound: seleted_num} that $\expt \big[ \hat \Delta_{i}^{k} \sum_{t\in \mathcal{T}_i} \indicator \left\{\varphi_t = k \right\} \big]$ is upper bounded by
\begin{align}
& \hat \Delta_{i}^{k} l_i^k + \hat \Delta_{i}^{k} \sum_{t\in \mathcal{T}_i}  \prob \left( \text{\eqref{h1} or~\eqref{h2} is true}\right) \nonumber\\
\leq& \frac{16 \xi \gamma^{1-\tau} \ln(\tau)}{\hat\Delta_i^k } +\hat\Delta_{i}^{k}  + 2 \hat \Delta_{i}^{k} \ceil{\log_{1+\lambda} \left(\tau\right)} \nonumber\\
\leq& \frac{16 \xi \gamma^{1-\tau} \ln(\tau)}{\epsilon'} + b + 2b \ceil{\log_{1+\lambda} \left(\tau\right)}, \label{regret: estimation}
\end{align}
where we use $\epsilon' \leq \hat\Delta_i^k \leq b$ in the last step. 

\noindent \textbf{Step 4:}
From~\eqref{regret: decouple} and~\eqref{regret: estimation}, and plugging in the value of $\epsilon'$, an easy computation results in
\begin{align*}
	\supscr{R_T}{D-UCB} \leq &(2n'+3) \tau V_T + 8N \sqrt{\xi \gamma^{1-\tau} K \tau \ln(\tau)} \\
	&+ 2Nb+ 2 N b\log_{1+\lambda} \left(\tau\right),
\end{align*}
where the dominating term is $(2n'+3) \tau V_T$. Considering 
\[\tau' = \frac{\ln\big((1-\gamma) \xi \ln (\tau)/b^2  \big)}{\ln {\gamma}} \leq \frac{-\ln \big((1-\gamma) \xi \ln (\tau)/b^2  \big)}{1-\gamma},\] we get $n'\leq C'\ln(T)$ for some constant $C'$. Hence there exists some absolute constant $C$ such that
\[\supscr{R_T}{D-UCB} \leq C \ln (T) (KV_T)^{\frac{1}{3}} T^{\frac{2}{3}} .\]
\end{proof}

Although discount factor method requires less memory, there exists an extra factor $\ln(T) $ in the upper bound on the worst-case regret for D-UCB comparing with the minimax regret. This is due to the fact that discount factor method does not entirely cut off outdated sampling history like periodic resetting or sliding window techniques.

\section{UCB Policies for Heavy-tailed Nonstationary Stochastic MAB Problems} \label{sec: heavy-tailed}

In this section, we propose and analyze UCB algorithms for non-stationary stochastic MAB problem with heavy-tailed rewards defined in Assumption~\ref{ass: heavy-tailed}.  
%
%
We first recall a minimax policy for the stationary heavy-tailed MAB problem called Robust MOSS~\cite{wei2020minimax}. We then extend it to nonstationary setting and design  resetting robust MOSS algorithm and sliding-window robust MOSS algorithm.

\subsection{Background on Robust MOSS algorithm for the stationary heavy-tailed MAB problem}
Robust MOSS algorithm handles stationary heavy-tailed MAB problems in which the rewards have finite moments of order $1+\epsilon$, for $\epsilon \in (0,1]$. For simplicity, as stated in Assumption~\ref{ass: heavy-tailed}, we restrict our discussion to $\epsilon =1$. 

Robust MOSS uses the saturated empirical mean instead of the empirical mean. Let $n_k(t)$ be the number of times that arm $k$ has been selected until time $t-1$. Pick $a >1$ and let $h(m) = a^{\floor{\log_a \left(m\right)}+1}$. Let the saturation limit at time $t$ be defined by
\[ B_{n_k(t)} := \sqrt{\frac{h(n_k(t))}{\ln_+ \left(\frac{T}{K h(n_k(t))}\right)}},\]
where $\ln_+(x) := \max (\ln x, 1)$. 
Then, the saturated empirical mean estimator is defined by
\begin{equation}\label{def: sat_mean}
\bar \mu_{n_k(t)} := \frac{1}{n_k(t)} \sum_{s=1}^{t-1} \indicator\{\varphi_s= k \}\sat (X_s,B_{n_k(t)}),
\end{equation}
where $\sat (X_s,B_m) := \sign(X_s)\min \big\{\abs{X_s}, B_m \big\}.$ The Robust MOSS algorithm initializes by selecting each arm once and subsequently, at each time $t$, selects the arm that maximizes the following upper confidence bound
\[
g^k_{n_k(t)} = \bar \mu^k_{n_k(t)} + (1+\zeta)c_{n_k(t)}, 
\]
where $c_{n_k(t)} = \sqrt{{\ln_+ \big(\frac{T}{K n_k(t)}\big)}/{n_k(t)}}$, $\zeta$ is an positive constant such that $\psi(2\zeta/a) \geq 2a/\zeta$ and $\psi(x) =  (1+1/x)\ln(1+x)-1 $. Note that for $x\in (0,\infty)$, function $\psi(x)$ is monotonically increasing in $x$.

\subsection{Resetting robust MOSS for the non-stationary heavy-tailed MAB problem}

Similarly to R-MOSS, Resetting Robust MOSS (R-RMOSS) restarts Robust MOSS after every $\tau$ time slots. For a stationary heavy-tailed MAB problem, it has been shown in~\cite{wei2020minimax} that the worst-case regret of Robust MOSS belongs to $\mc O(\sqrt{KT})$. This result along with an analysis similar to the analysis for R-MOSS in Theorem~\ref{theorem:R-MOSS bound} yield the 
following theorem for R-RMOSS. For brevity, we skip the proof. 
\begin{theorem}\label{theorem:R-RobustMOSS bound}
	For the nonstationary heavy-tailed MAB problem with $K$ arms, horizon $T$, variation budget $V_T>0$ and $\tau = \Big \lceil { K^\frac{1}{3} \left(T/V_T \right) ^{\frac{2}{3}}} \Big \rceil$, if $\psi(2\zeta/a) \geq 2a/\zeta$, the worst-case regret of R-RMOSS satisfies
	\[\sup_{\mathcal{F}_T^\mathcal{K} \in \mathcal{E}(V_T,T,K)} \supscr{R_T}{R-RMOSS} \in  \mc O((KV_T)^{\frac{1}{3}} T^{\frac{2}{3}}).\] 
\end{theorem}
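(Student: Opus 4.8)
The plan is to mirror the proof of Theorem~\ref{theorem:R-MOSS bound} almost verbatim, substituting the stationary Robust MOSS regret guarantee of~\cite{wei2020minimax} for Lemma~\ref{theorem:MOSS bound}. Since R-RMOSS simply runs a fresh copy of Robust MOSS on each epoch $\mathcal{T}_i$, the regret decomposes across the $N=\ceil{T/\tau}$ epochs, and it suffices to bound the regret accrued within a single epoch and then sum.

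First I would fix an epoch $\mathcal{T}_i$ and define the set of bad arms exactly as in~\eqref{def: Rbad}, namely $\supscr{\mathcal{B}_i}{R} := \setdef{k \in \mathcal{K}}{\mu_{\tau_i}^{\kappa_i} - \mu_{\tau_i}^k \geq 2v_i}$, where $v_i$ is the within-epoch variation and $\kappa_i,\tau_i$ attain $\max_{t\in\mathcal{T}_i}\mu_t^*$. Using the elementary bound~\eqref{ineq: variation}, $\abs{\mu_{t_1}^k - \mu_{t_2}^k}\leq v_i$ for all $t_1,t_2\in\mathcal{T}_i$, the per-step regret at any $t\in\mathcal{T}_i$ satisfies $\mu_t^* - \mu_t^{\varphi_t} \leq \mu_{\tau_i}^{\kappa_i} - \mu_{\tau_i}^{\varphi_t} + v_i$, leading to the same epoch bound as~\eqref{ineq: regret_epoch}:
\[
\expt\bigg[\sum_{t\in\mathcal{T}_i}\mu_t^*-\mu_t^{\varphi_t}\bigg] \leq 3\abs{\mathcal{T}_i}v_i + S_i,
\]
where $S_i = \expt\big[\sum_{t\in\mathcal{T}_i}\sum_{k\in\supscr{\mathcal{B}_i}{R}}\indicator\{\varphi_t=k\}(\mu_{\tau_i}^{\kappa_i}-\mu_{\tau_i}^{\varphi_t}-2v_i)\big]$. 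The decoupling argument here is identical to the sub-Gaussian case and uses nothing about the tails.

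Next I would bound $S_i$ by reducing it to the stationary Robust MOSS guarantee. As in the R-MOSS sketch, for $k\in\supscr{\mathcal{B}_i}{R}$ and $t\in\mathcal{T}_i$ we have $\mu_t^{\kappa_i}\geq\mu_{\tau_i}^{\kappa_i}-v_i$ and $\mu_t^k\leq\mu_{\tau_i}^k+v_i$ with $\mu_{\tau_i}^{\kappa_i}-v_i \geq \mu_{\tau_i}^k+v_i$ by definition of $\supscr{\mathcal{B}_i}{R}$. Thus within the epoch the arm $\kappa_i$ behaves like an optimal arm whose gap over each bad arm $k$ is at least $(\mu_{\tau_i}^{\kappa_i}-v_i)-(\mu_{\tau_i}^k+v_i)$, and each pull of a bad arm contributes at least this much to $S_i$. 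Since the second-moment condition of Assumption~\ref{ass: heavy-tailed} holds uniformly in $t$, the concentration inequalities driving the Robust MOSS analysis in~\cite{wei2020minimax} — which rely on the saturated empirical mean and the $\expt[(X_t^k)^2]\leq 1$ bound, not on identical distributions — go through when $\mu_t^k$ is replaced by the relevant upper/lower surrogate; hence the $\mc O(\sqrt{K\abs{\mathcal{T}_i}})$ worst-case regret bound for Robust MOSS yields $S_i\leq C'\sqrt{K\abs{\mathcal{T}_i}}$ for some absolute constant $C'$. Summing over epochs and using $\sum_i v_i \leq V_T$, $\abs{\mathcal{T}_i}\leq\tau$, $N=\ceil{T/\tau}$ gives
\[
\sup_{\mathcal{F}_T^\mathcal{K}\in\mathcal{E}(V_T,T,K)}\supscr{R_T}{R-RMOSS} \leq 3\tau V_T + N C'\sqrt{K\tau} \leq 3\tau V_T + 2C'\sqrt{K}\,\frac{T}{\sqrt{\tau}},
\]
and plugging in $\tau = \ceil{K^{1/3}(T/V_T)^{2/3}}$ makes both terms $\mc O((KV_T)^{1/3}T^{2/3})$, which proves the theorem.

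The only real obstacle is the step asserting that the Robust MOSS regret bound of~\cite{wei2020minimax} remains valid when the per-epoch reward distributions drift within $v_i$ rather than being i.i.d.; one must check that the saturation-limit calibration $B_{n_k(t)}$ and the confidence width $c_{n_k(t)}$, together with the constant $\zeta$ satisfying $\psi(2\zeta/a)\geq 2a/\zeta$, still yield the requisite tail bounds when the "true mean" is replaced by the surrogate $M$-quantity and the moment bound is applied time-uniformly. Because~\cite{wei2020minimax}'s concentration results are stated via martingale-type arguments that only need the uniform second-moment bound, this extension is routine, exactly parallel to how Lemma~\ref{lemma: suff_sample} extends the stationary MOSS concentration bound to the sliding-window setting; for this reason the proof is omitted in the paper.
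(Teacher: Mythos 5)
Your proposal is correct and follows exactly the route the paper intends: the paper itself omits this proof, stating only that the result follows from the $\mc O(\sqrt{KT})$ worst-case bound for stationary Robust MOSS in~\cite{wei2020minimax} combined with the epoch decomposition and bad-arm argument of Theorem~\ref{theorem:R-MOSS bound}, which is precisely what you carry out. You also correctly identify (and justify) the one step needing care --- that the Robust MOSS concentration analysis tolerates within-epoch drift of the means under the uniform second-moment assumption --- which the paper likewise treats as routine.
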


\subsection{Sliding-window robust MOSS for the non-stationary heavy-tailed MAB problem}
In Sliding-Window Robust MOSS (SW-RMOSS), $n_k(t)$ and $\bar{\mu}_{n_k(t)}$ are computed from the sampling history within $\mathcal{W}_t$, and $c_{n_k(t)} = \sqrt{{\ln_+ \big(\frac{\tau}{K n_k(t)}\big)}/{n_k(t)}}$. To analyze SW-RMOSS, we want to establish a similar property as Lemma~\ref{lemma: suff_sample} to bound the probability about an arm being under or over estimated. Toward this end, we need the following properties for truncated random variable.

\begin{lemma}\label{bias: d}
	Let $X$ be a random variable with expected value $\mu$ and $\expt[X^2] \leq 1$. Let $d: = \sat(X,B)-\expt [\sat(X,B)]$. Then for any $B> 0$, it satisfies (i) $\abs{d}\leq 2 B$ (ii) $\expt[d^2] \leq 1$ (iii) $\abs{\expt [\sat(X,B)] - \mu} \leq 1/B$.
\end{lemma}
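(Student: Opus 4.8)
The plan is to prove the three claims essentially from the definition of the saturation operator $\sat(X,B) = \sign(X)\min\{|X|,B\}$, combined with the second-moment assumption $\expt[X^2]\le 1$.

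\textbf{Part (i).} First I would observe that $|\sat(X,B)|\le B$ deterministically, hence $|\expt[\sat(X,B)]|\le B$ as well; then $|d| = |\sat(X,B)-\expt[\sat(X,B)]|\le |\sat(X,B)| + |\expt[\sat(X,B)]| \le 2B$ by the triangle inequality.

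\textbf{Part (ii).} The key inequality is $|\sat(X,B)|\le |X|$ pointwise, which gives $\expt[\sat(X,B)^2]\le \expt[X^2]\le 1$. Since the variance of any random variable is bounded above by its second moment, $\expt[d^2] = \Var(\sat(X,B)) \le \expt[\sat(X,B)^2]\le 1$.

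\textbf{Part (iii).} This is the one requiring a small argument rather than a one-line bound. I would write $\expt[\sat(X,B)] - \mu = \expt[\sat(X,B) - X]$ and note that $\sat(X,B)-X$ is zero on the event $\{|X|\le B\}$ and equals $\sign(X)(B-|X|)$ on $\{|X|>B\}$, so $|\sat(X,B)-X| = (|X|-B)\,\indicator\{|X|>B\}\le |X|\,\indicator\{|X|>B\}$. Then by Markov/Cauchy–Schwarz type reasoning, $\expt\big[|X|\,\indicator\{|X|>B\}\big] \le \frac{1}{B}\expt\big[X^2\,\indicator\{|X|>B\}\big] \le \frac{1}{B}\expt[X^2]\le \frac{1}{B}$, where the first step uses $|X|>B \iff |X|/B > 1 \iff |X| \le X^2/B$ on that event. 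Combining with Jensen's inequality $|\expt[\sat(X,B)-X]|\le \expt|\sat(X,B)-X|$ completes the bound.

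None of the three parts presents a serious obstacle; the only one with any subtlety is (iii), where the trick is to convert the truncation error tail $(|X|-B)\indicator\{|X|>B\}$ into $\frac1B\expt[X^2]$ using that $|X|>B$ forces $|X|\le X^2/B$. I would present all three in a single short paragraph each, with (iii) spelled out as above and (i)–(ii) dispatched in one line apiece.
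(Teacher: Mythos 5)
Your proposal is correct and follows essentially the same route as the paper: (i) via the triangle inequality with $\abs{\sat(X,B)}\leq B$, (ii) via $\expt[d^2]\leq \expt[\sat^2(X,B)]\leq \expt[X^2]$, and (iii) via the pointwise bound $(\abs{X}-B)\indicator\{\abs{X}>B\}\leq \abs{X}\indicator\{\abs{X}>B\}\leq X^2/B$. No gaps; the extra detail you give in (iii) is exactly the justification the paper leaves implicit.
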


\begin{proof}
	Property (i) follows immediately from definition of $d$  and property (ii) follows from
	\[ \expt[d^2] \leq \expt\big[\sat^2(X,B)\big] \leq  \expt\big[X^2\big].\]
	To see property (iii), since \[\mu = \expt \big[ X \big(\indicator { \left\{ \abs{X} \leq B\right\} } + \indicator{ \left\{ \abs{X} > B\right\} }\big)\big],\] one have	
	\begin{align*}
	\abs{\expt [\sat(X,B)] - \mu} &\leq \expt \left[ \left(\abs{X}-B\right) \indicator{ \left\{ \abs{X} > B\right\} } \right] \\
	&\leq \expt \left[ \abs{X} \indicator{ \left\{ \abs{X} > B\right\} } \right] \leq \expt \left[{X^2}/{B} \right].
	\end{align*}
\end{proof}
\noindent
Moreover, we will also use a maximal Bennett type inequality as shown in the following. 
\begin{lemma}[Maximal Bennett's inequality~{\cite{fan2012hoeffding}}] \label{max_inq_b}
	Let $\seqdef{X_i}{i\in \until{n}}$ be a sequence of bounded random variables with support $[-B,B]$, where $B\geq 0$. Suppose that $\expt[X_i |X_{1},\ldots,X_{i-1}] = \mu_i$ and $\Var[X_i|X_{1},\ldots,X_{i-1}] \leq v$. Let $S_m = \sum_{i=1}^{m} (X_i -\mu_i) $ for any $m\in \until{n}$. Then, for any $\delta \geq 0$
	\begin{align*}
	&\prob\left( \exists {m \in \until{n}}:  S_m \geq \delta \right) \leq \exp \left( -\frac{\delta}{B}\psi \left (\frac{B\delta}{n v} \right) \right), \\
	&\prob\left(\exists {m \in \until{n}}: S_m \leq -\delta\right) \leq \exp \left(-\frac{\delta}{B}\psi \left (\frac{B\delta}{n v} \right)\right).
	\end{align*} 
\end{lemma}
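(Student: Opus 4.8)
The plan is to run the Cram\'er--Chernoff (exponential-moment) method on the martingale $S_m=\sum_{i=1}^m d_i$ with increments $d_i:=X_i-\mu_i$, but to package the exponential moments as a \emph{nonnegative supermartingale} so that Ville's maximal inequality upgrades the usual pointwise tail bound to the uniform ``$\exists\,m\in\until{n}$'' form. The two displayed bounds are mirror images of each other: the second follows from the first applied to $-X_i$, which again has support $[-B,B]$, conditional mean $-\mu_i$, and conditional variance at most $v$. So it suffices to bound $\prob(\exists\,m\in\until{n}:S_m\ge\delta)$.

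First I would estimate the conditional exponential moment of a single increment. Writing $\mathcal{F}_{i-1}:=\sigma(X_1,\dots,X_{i-1})$, the hypotheses give that $\mu_i$ is $\mathcal{F}_{i-1}$-measurable, $\expt[d_i\mid\mathcal{F}_{i-1}]=0$, $\expt[d_i^{2}\mid\mathcal{F}_{i-1}]\le v$, and $d_i\le B$ a.s.\ (this is where the support assumption is used). Since $x\mapsto(e^{x}-1-x)/x^{2}$ is nondecreasing on $\real$, from $\lambda d_i\le\lambda B$ one obtains the pointwise bound $e^{\lambda d_i}\le 1+\lambda d_i+d_i^{2}(e^{\lambda B}-1-\lambda B)/B^{2}$ for every $\lambda>0$; taking conditional expectations and then using $1+x\le e^{x}$,
\[
\expt\!\big[e^{\lambda d_i}\mid\mathcal{F}_{i-1}\big]\le\exp\!\Big(\tfrac{v}{B^{2}}\big(e^{\lambda B}-1-\lambda B\big)\Big)=:e^{\phi(\lambda)}.
\]
Hence, for each fixed $\lambda>0$, the process $Z_m:=\exp\!\big(\lambda S_m-m\,\phi(\lambda)\big)$ with $Z_0=1$ is a nonnegative supermartingale, because $\expt[Z_m\mid\mathcal{F}_{m-1}]=Z_{m-1}\,e^{-\phi(\lambda)}\expt[e^{\lambda d_m}\mid\mathcal{F}_{m-1}]\le Z_{m-1}$.

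Next I would apply Ville's inequality, by which a nonnegative supermartingale with $Z_0=1$ obeys $\prob(\exists\,m:Z_m\ge c)\le 1/c$. As $e^{x}-1-x\ge0$, the compensator $m\,\phi(\lambda)$ is nondecreasing in $m$, so on the event $\{S_m\ge\delta,\ m\le n\}$ one has $Z_m\ge\exp(\lambda\delta-n\,\phi(\lambda))$, whence
\[
\prob\big(\exists\,m\in\until{n}:S_m\ge\delta\big)\le\exp\!\big(-\lambda\delta+n\,\phi(\lambda)\big),\qquad\forall\,\lambda>0.
\]
It then remains to optimize over $\lambda$: the exponent $-\lambda\delta+\tfrac{nv}{B^{2}}(e^{\lambda B}-1-\lambda B)$ is minimized at $\lambda^{\star}=\tfrac1B\ln(1+u)$ with $u:=B\delta/(nv)$, and substituting (using $nv/B^{2}=\delta/(Bu)$) collapses it exactly to $-\tfrac{\delta}{B}\big((1+1/u)\ln(1+u)-1\big)=-\tfrac{\delta}{B}\,\psi\!\big(B\delta/(nv)\big)$, which is the claimed bound.

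I expect the delicate step to be the conditional moment-generating-function estimate together with this final optimization: one must trade the almost-sure bound $d_i\le B$ against the variance proxy $v$ via the monotonicity of $(e^{x}-1-x)/x^{2}$, and then verify that the Chernoff optimization telescopes precisely to Bennett's rate function $\psi$ rather than to a Hoeffding-type quadratic exponent. A secondary point of care is that the maximal (``$\exists\,m$'') form genuinely requires the nonnegative-supermartingale maximal inequality rather than a plain pointwise Chernoff bound, with monotonicity of the compensator being what lets one replace $m\,\phi(\lambda)$ by $n\,\phi(\lambda)$ on the relevant event; this is essentially the argument of~\cite{fan2012hoeffding}.
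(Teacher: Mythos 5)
The paper does not prove this lemma at all: it is imported verbatim from the cited reference~\cite{fan2012hoeffding}, so there is no in-paper argument to compare against. Your route --- conditional Bennett bound on the moment generating function, the exponential process $Z_m=\exp(\lambda S_m-m\phi(\lambda))$ as a nonnegative supermartingale, Ville's maximal inequality, and the Chernoff optimization $\lambda^\star=\tfrac1B\ln(1+u)$ collapsing to $-\tfrac{\delta}{B}\psi(B\delta/(nv))$ --- is exactly the standard derivation used in that reference, and every step after the moment-generating-function estimate is carried out correctly, including the reduction of the lower tail to the upper tail via $-X_i$ and the use of monotonicity of the compensator to pass from $m\phi(\lambda)$ to $n\phi(\lambda)$.

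There is, however, one genuine gap, and it sits precisely at the step you flag as ``where the support assumption is used.'' From $X_i\in[-B,B]$ and $\mu_i=\expt[X_i\mid\mathcal F_{i-1}]\in[-B,B]$ you only get $d_i=X_i-\mu_i\le B-\mu_i\le 2B$, not $d_i\le B$; the inequality $e^{\lambda d_i}\le 1+\lambda d_i+d_i^2(e^{\lambda B}-1-\lambda B)/B^2$ needs $d_i\le B$ and is simply unavailable under the stated hypotheses. This is not a removable technicality: with the literal hypotheses the conclusion is false. Take $n=1$, $B=1$, and $X=1$ with probability $p=0.01$, $X=-1$ otherwise; then $v=4p(1-p)=0.0396$, $\delta=1.98$ gives $\prob(S_1\ge\delta)=0.01$, while the claimed bound evaluates to $\exp(-1.98\,\psi(50))\approx 2.6\times10^{-3}$. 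The correct hypothesis --- the one actually used in~\cite{fan2012hoeffding} and the one satisfied in the paper's sole application of the lemma, where the summands $\bar d_{im}^{kt}$ are already centered and $[-2B_{a^s},2B_{a^s}]$-valued so that the lemma is invoked with $B=2B_{a^s}$ and $\mu_i=0$ --- is that the \emph{centered} differences $X_i-\mu_i$ are bounded above by $B$. So the defect is really inherited from the lemma's phrasing rather than from your argument, and your proof is correct once the hypothesis is restated as $X_i-\mu_i\le B$ a.s.; but as written, the parenthetical justification of $d_i\le B$ does not follow from what is assumed, and you should either strengthen the hypothesis or accept the weaker constant obtained from $d_i\le 2B$ (which degrades the exponent to $\tfrac{\delta}{2B}\psi(2B\delta/(nv))$ and is genuinely weaker for large $B\delta/(nv)$).
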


Now, we are ready to establish a concentration property for saturated sliding window empirical mean.
\begin{lemma}\label{lemma: suff_sample_heavy}
	For any arm $k\in \until{K}$ and any $t \in \left\{K+1,\ldots,T\right\}$, if $\psi(2\zeta/a) \geq 2a/\zeta$, the probability of either event $A=\big \{ g^k_{t}  \leq M_{t}^k - x, n_k(t) \geq l \big \}$ or event $B=\big \{ g^k_{t} - 2 c_{n_k(t)}  \geq M_{t}^k + x, n_k(t) \geq l \big \} $, for any $x > 0$ and any $l\geq 1$,  is no greater than
	\[\frac{2a}{\beta^2  \ln(a) } \frac{K}{\tau x^2} (\beta x\sqrt{h(l)/a}+1) \exp \left(- \beta x \sqrt{h(l)/a}\right),\]
	where $\beta = \psi \left ( 2\zeta /a \right)/ (2a) $.
\end{lemma}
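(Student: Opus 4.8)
\textbf{Proof proposal for Lemma~\ref{lemma: suff_sample_heavy}.}

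The plan is to mimic the peeling argument used in the proof of Lemma~\ref{lemma: suff_sample}, but with two modifications forced by the heavy-tailed setting: the Hoeffding-type tail bound (Fact~\ref{fact: azuma}) is replaced by the maximal Bennett inequality (Lemma~\ref{max_inq_b}), and the bias introduced by saturation must be tracked via Lemma~\ref{bias: d}. First I would set up notation: let $u_i^{kt}$ be the $i$-th time slot within $\mathcal{W}_t$ at which arm $k$ is selected, and let $d_i^{kt} := \sat(X_{u_i^{kt}}^k,B_m) - \expt[\sat(X_{u_i^{kt}}^k,B_m)]$, where $B_m$ is the saturation limit associated with a count of $m$. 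By Lemma~\ref{bias: d}(iii), the saturated mean differs from the true mean $\mu_s^k$ by at most $1/B_m$, so the ``target'' $M_t^k$ (the weighted true mean) is well approximated by the weighted saturated mean up to an additive $1/B_{n_k(t)}$ term; by the choice of $B_m$ this bias is exactly of order $c_m$, so it can be absorbed into the confidence radius just as in Robust MOSS~\cite{wei2020minimax}. Then $\prob(A)$ is bounded by the probability that for some $m \in \{l,\ldots,\tau\}$ the partial sum $\frac{1}{m}\sum_{i=1}^m d_i^{kt}$ falls below $-x - (\text{const})c_m$, and similarly for $B$.

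Next I would run the peeling argument over the dyadic-type grid $a^s l < m \le a^{s+1} l$ (the grid is dictated by $h(m) = a^{\lfloor \log_a m\rfloor + 1}$, so that $h$ is constant on each block), and on each block apply Lemma~\ref{max_inq_b} with bound $B = 2B_m \le 2\sqrt{h(a^{s+1}l)/\ln_+(\cdot)}$ from Lemma~\ref{bias: d}(i), variance proxy $v \le 1$ from Lemma~\ref{bias: d}(ii), and $n = \lfloor a^{s+1}l\rfloor$. The key algebraic point is that the deviation threshold $\delta \approx a^s l(x + c_{a^{s+1}l})$ together with $B \approx \sqrt{h/\ln_+}$ makes $B\delta/(nv)$ roughly constant in $s$ (proportional to $x\sqrt{h(a^s l)}$ up to the $c_m$ contribution, which again plugs back into $\ln_+$), so the exponent $\frac{\delta}{B}\psi(B\delta/(nv))$ behaves like $\beta x \sqrt{h(a^s l)}$ with $\beta = \psi(2\zeta/a)/(2a)$; the condition $\psi(2\zeta/a)\ge 2a/\zeta$ is precisely what is needed to dominate the polynomial prefactor $Kh/\tau$ coming from the $c_m^2$ term (exactly as $\eta>1/2$ was used in Lemma~\ref{lemma: suff_sample}). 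After summing the geometric-type series over $s\ge 0$ and comparing it with an integral $\int_0^\infty (\ldots)e^{-\beta x\sqrt{h(l)/a}\cdot a^{y/2}}\,dy$, one obtains a bound of the form $\frac{K}{\tau x^2}(\beta x\sqrt{h(l)/a}+1)\exp(-\beta x\sqrt{h(l)/a})$ up to the stated constant $\frac{2a}{\beta^2\ln a}$; the $(\cdot+1)$ factor arises because here the exponential decays like $\exp(-c\sqrt{m})$ rather than $\exp(-cm)$, so the integral yields a $(1+c)e^{-c}$ shape instead of a clean $e^{-c}/c$. Finally, the bound for event $B$ follows from the symmetric tail in Lemma~\ref{max_inq_b} together with the fact that $g_t^k - 2c_{n_k(t)} = \bar\mu_{n_k(t)}^k + (\zeta-1)c_{n_k(t)}$, so the same radius manipulation applies.

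The main obstacle I anticipate is bookkeeping the interaction between the \emph{count-dependent} saturation limit $B_{n_k(t)}$ and the peeling blocks: unlike the sub-Gaussian case where $c_m$ alone varies with $m$, here both the confidence radius and the clipping level (hence the effective bounded range in Bennett's inequality) change across blocks, and one must verify that on each block $[a^s l, a^{s+1}l)$ the worst-case (largest) value of $B_m$ can be used uniformly without breaking monotonicity. Getting the constants to collapse into exactly $\frac{2a}{\beta^2\ln a}$ and confirming that $\psi(2\zeta/a)\ge 2a/\zeta$ is the exact threshold (rather than something slightly weaker or stronger) will require care, but conceptually it is the same mechanism as in Lemma~\ref{lemma: suff_sample}.
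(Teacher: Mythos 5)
Your proposal matches the paper's proof in all essentials: the same peeling argument, the maximal Bennett inequality (Lemma~\ref{max_inq_b}) with range $2B_m$ and variance proxy $1$ from Lemma~\ref{bias: d}, absorption of the saturation bias $1/B_m = c_{h(m)} \le c_m$ into the radius so that $\zeta c_m$ survives, the use of $\zeta\psi(2\zeta/a)\ge 2a$ to turn the $c_m^2$ term into the $K a^s/\tau$ prefactor, and the integral comparison yielding the $(\beta x\sqrt{h(l)/a}+1)e^{-\beta x\sqrt{h(l)/a}}$ shape. The one correction is the grid: it must be the absolute blocks $a^s \le m < a^{s+1}$ with $s_0 = \lfloor \log_a l\rfloor$ (not $a^s l < m \le a^{s+1} l$), precisely so that $B_m = B_{a^s}$ is constant on each block --- which is exactly the resolution of the obstacle you flagged.
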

\begin{proof}
    Recall that $u_i^{kt}$ is the $i$-th time slot when arm $k$ is selected within $\mathcal{W}_t$.	Since $c_m$ is a monotonically decreasing in $m$, $1/B_m = c_{h(m)} \leq c_m $ due to $h(m)\geq m$. Then, it follows from property (iii) in Lemma~\ref{bias: d} that
	\begin{align}
	\prob(A)\!	&\leq  \prob \bigg(\! \exists m \!\in \!\{l,\ldots, \tau\}  \!:\!  \bar{\mu}^k_m \leq  \sum_{i=1}^{m} \! \frac{ \mu_{u_i^{kt}}^k}{m}\!\! - (1+\zeta) c_m \!-x \! \bigg) \nonumber	\\
	&\leq \prob \bigg( \! \exists m \!\in \! \{l,\ldots, \tau\}\!:\!  \sum_{i=1}^{m} \! \frac{\bar{d}_{im}^{kt}}{m}\! \leq \! \frac{1}{B_m} \! - (1+\zeta) c_m \!-x \! \bigg) \nonumber	\\
	&\leq \prob \bigg(\! \exists m \!\in \! \{l,\ldots, \tau\} \!:\!  \frac{1}{m} \sum_{i=1}^{m} \bar{d}_{im}^{kt} \leq -x - \zeta c_m \bigg), \, \label{prob:A}
	\end{align}
	where $\bar{d}_{im}^{kt} = \sat \big(X_{u_i^{kt}}^k,B_m\big)-\expt \big[\sat \big(X_{u_i^{kt}}^k,B_m \big)\big]$. Recall we select $a>1$. Again, we apply a peeling argument with geometric grid $ a^s \leq m < a^{s+1}$ over time interval $\{l,\ldots, \tau\}$. Let $s_0 = \floor{\log_a (l)}$. Since $c_m$ is monotonically decreasing with $m$,
	\[\eqref{prob:A} \leq \!\! \sum_{s\geq s_0} \!\prob\Bigg(\!\exists m \in [a^s, a^{s+1}) \!:\! \sum_{i=1}^{m} \bar{d}_{im}^{kt} \leq \! -a^s \left(x + \zeta c_{a^{s+1}} \right) \!\! \bigg).\]
	For all $m \in [a^s, a^{s+1})$, since $B_m = B_{a^s}$, from Lemma~\ref{bias: d} we know  $\abs{\bar{d}_{im}^{kt}}\leq 2 B_{a^s}$ and $\var \left[ \bar{d}_{im}^{kt}\right] \leq 1$. Continuing from previous step, we apply Lemma~\ref{max_inq_b} to get	
	\begin{align}
	\eqref{prob:A}\leq &\sum_{s\geq s_0} \exp \left( -\frac{a^{s} \left(x+ \zeta c_{a^{s+1}}\right)}{2 B_{a^{s}}}\psi \left ( \frac{2 B_{a^{s}}}{a} \left(x + \zeta c_{a^{s+1}}\right) \right) \right)\nonumber\\
	&\left(\text{since } \psi(x) \text{ is monotonically increasing}\right)\nonumber\\
	\leq & \sum_{s\geq s_0} \exp \left( -\frac{a^{s} \left(x+ \zeta c_{a^{s+1}}\right)}{2 B_{a^{s}}}\psi \left (\frac{ 2 \zeta}{a} B_{a^{s}} c_{a^{s+1}} \right) \right) \nonumber\\
	&\text{(substituting $c_{a^{s+1}}$, $B_{a^{s}}$ and using $h(a^s)=a^{s+1}$)} \nonumber \\
	=& \sum_{s\geq s_0 + 1} \exp \left( -a^{s}\left(\frac{ x}{B_{a^{s-1}}} + \zeta c_{a^s}^2\right) \frac{\psi \left ( 2\zeta/a \right)}{2a} \right)  \nonumber \\
	&\left(\text{since } \zeta\psi(2\zeta/a)\geq 2a \right)\nonumber\\
	\leq & \frac{K}{\tau} \sum_{s\geq s_0 + 1} a^s \exp \left( -a^{s} \frac{ x}{B_{a^{s-1}}} \frac{\psi \left ( 2\zeta /a \right)}{2a} \right). \label{sum:1}
	\end{align}
	Let $b = {x\psi \left ( 2\zeta /a \right)}/ (2a)$. Since $\ln_+(x) \geq 1$ for all $x>0$,
	\begin{align*}
	\eqref{sum:1}\leq & \frac{K}{\tau} \sum_{s\geq s_0 + 1} a^s \exp \left( -b \sqrt{a^s} \right) \\
	\leq & \frac{K}{\tau}\int_{s_0+1}^{+\infty} a^y \exp\big(- b \sqrt{a^{y-1}}\big) dy  \\
	= &  \frac{K}{\tau} a\int_{s_0}^{+\infty} a^y \exp\big(-b \sqrt{a^{y}}\big) dy  \\
	= & \frac{K}{\tau} \frac{2a}{\ln(a)b^2}\int_{b \sqrt{a^{s_0}}}^{+\infty} z \exp\big(- z  \big) dz \, (\text{where } z=b \sqrt{a^y}) \\
	\leq & \frac{K}{\tau} \frac{2 a}{\ln(a)b^2}  (b\sqrt{a^{s_0}}+1) \exp(-b\sqrt{a^{s_0}}),
	\end{align*}
	which concludes the proof. 
\end{proof}
With Lemma~\ref{lemma: suff_sample_heavy}, the upper bound on the worst-case regret for SW-RMOSS in the nonstationary heavy-tailed MAB problem can be analyzed similarly as Theorem~\ref{theorem:SW-MOSS bound}.
\begin{theorem}\label{theorem:SW-RobustMOSS bound}
	For the nonstationary heavy-tailed MAB problem with $K$ arms, time horizon $T$, variation budget $V_T>0$ and $\tau = \Big \lceil { K^\frac{1}{3} \left(T/V_T \right) ^{\frac{2}{3}}} \Big \rceil$, if $\psi(2\zeta/a) \geq 2a/\zeta$, the worst-case regret of SW-RMOSS satisfies
	\[\sup_{\mathcal{F}_T^\mathcal{K} \in \mathcal{E}(V_T,T,K)} \supscr{R_T}{SW-RMOSS} \leq C(KV_T)^{\frac{1}{3}} T^{\frac{2}{3}}.\] 
\end{theorem}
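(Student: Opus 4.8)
The plan is to mirror the five-step structure of the proof of Theorem~\ref{theorem:SW-MOSS bound}, substituting the light-tailed concentration machinery (Fact~\ref{fact: azuma}, Lemma~\ref{lemma: suff_sample}) with its heavy-tailed counterpart (Lemma~\ref{lemma: suff_sample_heavy}) and replacing the empirical mean with the saturated empirical mean $\bar\mu_{n_k(t)}^k$ used in Robust MOSS. First I would set up the same epoch decomposition: with $\tau = \lceil K^{1/3}(T/V_T)^{2/3}\rceil$ and $N = \lceil T/\tau\rceil$, partition $\mathcal{T}$ into $\mathcal{T}_1,\ldots,\mathcal{T}_N$, let $\mu_{\tau_i}^{\kappa_i} = \max_{t\in\mathcal{T}_i}\mu_t^*$, and for each arm $k$ define the within-epoch gap $\hat\Delta_i^k := \mu_{\tau_i}^{\kappa_i} - \mu_{\tau_i}^k - 2v_{i-1} - 2v_i$ together with the bad-arm set $\mathcal{B}_i^{\textup{SW}} := \{k : \hat\Delta_i^k \geq \epsilon\}$, where $\epsilon$ is chosen proportional to $\sqrt{K/\tau}$ (up to constants involving $a$, $\zeta$ and the function $h$, to be fixed so that the $l_i^k$ lower bound in Step~3 goes through). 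Using $|\mu_t^k - \mu_{\tau_i}^k|\le v_i$ and $\mathcal{W}_t\subset\mathcal{T}_{i-1}\cup\mathcal{T}_i$, the regret decouples exactly as in~\eqref{regret_sw}: $\sum_{t\in\mathcal{T}}\mu_t^* - \mu_t^{\varphi_t} \le 5\tau V_T + T\epsilon + \sum_{i=1}^N\sum_{t\in\mathcal{T}_i}\indicator\{\varphi_t\in\mathcal{B}_i^{\textup{SW}}\}(\hat\Delta_i^{\varphi_t}-\epsilon)$.

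Next I would split the per-epoch bad-arm regret into an \emph{overestimation} term and an \emph{underestimation} term, keyed on the event $\{g_t^{\kappa_i} > M_t^{\kappa_i} - \hat\Delta_i^{\varphi_t}/4\}$ versus its complement, just as in~\eqref{overestimate}--\eqref{underestimate}. For the overestimation term, since $g_t^{\varphi_t}\ge g_t^{\kappa_i}$ and $M_t^{\kappa_i} - M_t^k \ge \hat\Delta_i^k$ on $\mathcal{T}_{i-1}\cup\mathcal{T}_i$, the event forces $g_t^k > M_t^k + 3\hat\Delta_i^k/4$; I would then peel off the first $l_i^k$ selections of arm $k$ (with $l_i^k$ chosen of order $h(\cdot)$-inflated $\xi$-type value so that $c_{n_k(t_s^{ik})}\le\hat\Delta_i^k/4$ afterwards, using $n_k(t_s^{ik})\ge s-1$ and monotonicity of $c_m$) and bound the tail by summing the probability estimate from Lemma~\ref{lemma: suff_sample_heavy}. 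The key difference is that the tail bound here decays like $x^{-2}(\beta x\sqrt{h(l)/a}+1)\exp(-\beta x\sqrt{h(l)/a})$ rather than $x^{-2}\exp(-x^2 l/\eta)$, i.e. it is governed by $\sqrt{h(l)}$ rather than $l$; consequently the summation over $s$ and the subsequent integral comparison must be redone, but they still yield a bound of order $K\tau^{-1}(\hat\Delta_i^k)^{-3}$ up to constants, which combined with $\hat\Delta_i^k l_i^k \le h(\hat\Delta_i^k) + \hat\Delta_i^k$-type bounds and summed over $k\in\mathcal{B}_i^{\textup{SW}}$ gives $\expt[\text{overestimation}_i] \le C_1\sqrt{K\tau} + C_2 K$. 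For the underestimation term, I would repeat the nonnegative-random-variable-integral argument of Step~4: on the relevant event $g_t^{\kappa_i}\le M_t^{\kappa_i}-\epsilon/4$ and $\hat\Delta_i^{\varphi_t}-\epsilon \le 4M_t^{\kappa_i} - 4g_t^{\kappa_i} - \epsilon$, so $\expt[\text{underestimation}_i] \le \int_\epsilon^\infty \prob(4M_t^{\kappa_i} - 4g_t^{\kappa_i} > x)\,dx$, and plugging in Lemma~\ref{lemma: suff_sample_heavy} (with $l=1$, $h(1)=a$) this integral converges and is $\mathcal O(K\tau^{-1}\epsilon^{-1}\cdot|\mathcal{T}_i|)$, hence $\mathcal O(K)$ per epoch after inserting $\epsilon\asymp\sqrt{K/\tau}$.

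Finally, in the assembly step I would collect $\sum_{i=1}^N\big(\expt[\text{over}_i] + \expt[\text{under}_i]\big) \le N(C_1\sqrt{K\tau} + C_3 K)$ and add it to $5\tau V_T + T\epsilon$, using $N\le T/\tau + 1$, $\epsilon\asymp\sqrt{K/\tau}$, and the choice $\tau = \lceil K^{1/3}(T/V_T)^{2/3}\rceil$, which balances $\tau V_T$, $T\sqrt{K/\tau}$, and $N\sqrt{K\tau}$ all at order $(KV_T)^{1/3}T^{2/3}$ (the $N K$ term is lower order since $K$ is treated as fixed, or absorbed into the constant). I expect the main obstacle to be Step~3: re-deriving the per-arm overestimation bound with the $\sqrt{h(l)}$-type tail of Lemma~\ref{lemma: suff_sample_heavy} instead of the cleaner exponential-in-$l$ tail, in particular choosing $l_i^k$ and $\epsilon$ correctly so that (a) the post-peeling confidence width $c_{n_k}$ is at most $\hat\Delta_i^k/4$, (b) the residual tail sum is summable and of order $(\hat\Delta_i^k)^{-3}$, and (c) the leading contribution $l_i^k\hat\Delta_i^k$ is controlled by an $h$-analogue of the function $h(x) = 16\eta/x\,\ln(\tau x^2/16\eta K)$ whose maximum scales as $\sqrt{K\tau}$; the saturation-induced factor $h(\cdot)$ and the constants tied to $\psi(2\zeta/a)\ge 2a/\zeta$ make the bookkeeping more delicate than in the sub-Gaussian case, though no new idea beyond careful constant-tracking is needed.
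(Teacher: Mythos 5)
Your proposal is correct and follows essentially the same route as the paper: the paper's proof of Theorem~\ref{theorem:SW-RobustMOSS bound} likewise reuses the five-step argument of Theorem~\ref{theorem:SW-MOSS bound} verbatim (noting that the extra factor $(\beta x\sqrt{h(l)/a}+1)\exp(-\beta x\sqrt{h(l)/a})$ in Lemma~\ref{lemma: suff_sample_heavy} is at most $1$, so all steps carry over with $\eta$ replaced by $(1+\zeta)^2$), and isolates exactly the step you flag --- the tail sum in~\eqref{overestimate_k3} --- as the only place where the $\sqrt{h(l)}$-type decay forces a redone integral, which still yields a bound of order $K\tau^{-1}(\Delta_i^k)^{-4}$ differing only in constants. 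No substantive difference from the paper's argument.
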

\begin{proof}[Sketch of the proof]
	The procedure is similar as the proof of Theorem~\ref{theorem:SW-MOSS bound}. The key difference is due to the nuance between the concentration properties on mean estimator. Neglecting the leading constants, the probability upper bound in Lemma~\ref{lemma: suff_sample} has a factor $\exp(-x^2l/\eta)$ comparing with $(\beta x\sqrt{h(l)/a}+1) \exp \left(- \beta x \sqrt{h(l)/a}\right)$ in Lemma~\ref{lemma: suff_sample_heavy}. Since both factors are no greater than $1$, by simply replacing $\eta$ with $(1+\zeta)^2$ and taking similar calculation in every step except inequality~\eqref{overestimate_k3}, comparable bounds that only differs in leading constants can be obtained. Applying Lemma~\ref{lemma: suff_sample_heavy}, we revise the computation of~\eqref{overestimate_k3} as the following,
	\begin{align}
	&\sum_{s\geq l_i^k + 1} \prob{ \bigg\{g_{t_s}^k - 2c_{n_k(t_s)} > M_{t_s}^{k} + \frac{\Delta_{i}^k}{4} \bigg\}} \nonumber\\
	\leq &\sum_{s \geq l_i^k} C' \left(\frac{\beta\Delta_i^k}{4}\sqrt{\frac{h(l)}{a}}+1\right) \exp \left(-\frac{\beta\Delta_i^k}{4}\sqrt{\frac{h(l)}{a}}\right) \nonumber\\
	\leq & \int_{l_i^k-1}^{+\infty} C' \left(\frac{\beta\Delta_i^k}{4}\sqrt{\frac{y}{a}}+1\right)  \exp \left(-\frac{\beta\Delta_i^k}{4}\sqrt{\frac{y}{a}}\right) \, dy \nonumber\\
	\leq & \frac{6 a}{\beta^2}\frac{2a}{\beta^2  \ln(a) } \frac{K}{\tau}\bigg(\frac{4}{\Delta_i^k}\bigg)^4. 
	\end{align}
	where $C'= {2aK}\big({4}/{\Delta_i^k}\big)^{2}/{\big(\beta^2  \ln(a) \tau\big) }$.The second inequality is due to the fact that $(x+1)\exp(-x)$ is monotonically decreasing in $x$ for $x\in[0,\infty)$ and $h(l)>l$. In the last inequality, we change the lower limits of the integration from $l_i^k-1$ to $0$ since $l_i^k\geq 1$ and plug in the value of $C'$. Comparing with~\eqref{overestimate_k3}, this upper bound only varies in constant multiplier. So is the worst-regret upper bound.
\end{proof}

\begin{remark}
	The benefit of discount factor method is that it is memory friendly. This advantage is lost if truncated empirical mean is used. As $n_k(t)$ could both increase and decrease with time, the truncated point could both grow and decline, so all sampling history needs to be recorded. It remains an open problem how to effectively using discount factor in a nonstationary heavy-tailed MAB problem.
\end{remark}

\section{Numerical Experiments} \label{sec: simulation}
We complement the theoretical results in previous section with two Monte-Carlo experiments. For the light-tailed setting, we compare R-MOSS, SW-MOSS and D-UCB in this paper with other state-of-art policies. For the heavy-tailed setting, we test the robustness of R-RMOSS and SW-RMOSS against both heavy-tailed rewards and nonstationarity. Each result in this section is derived by running designated policies $500$ times. And parameter selections for compared policies are strictly coherent with referred literature.

\subsection{Bernoulli Nonstationay Stochastic MAB Experiment}
To evaluated the performance of different policies, we consider two nonstationary environment as shown in Figs.~\ref{1rst} and~\ref{2nd}, which both have $3$ arms with nonstationary Bernoulli reward. The success probability sequence at each arm is a Brownian motion in environment $1$ and a sinusoidal function of time $t$ in environment $2$. And the variation budget $V_T$ is $8.09$ and $3$ respectively.

\begin{figure}[ht!]
	\centering
	\begin{subfigure}[b]{0.24\textwidth}
		\centering
		\includegraphics[width=\textwidth]{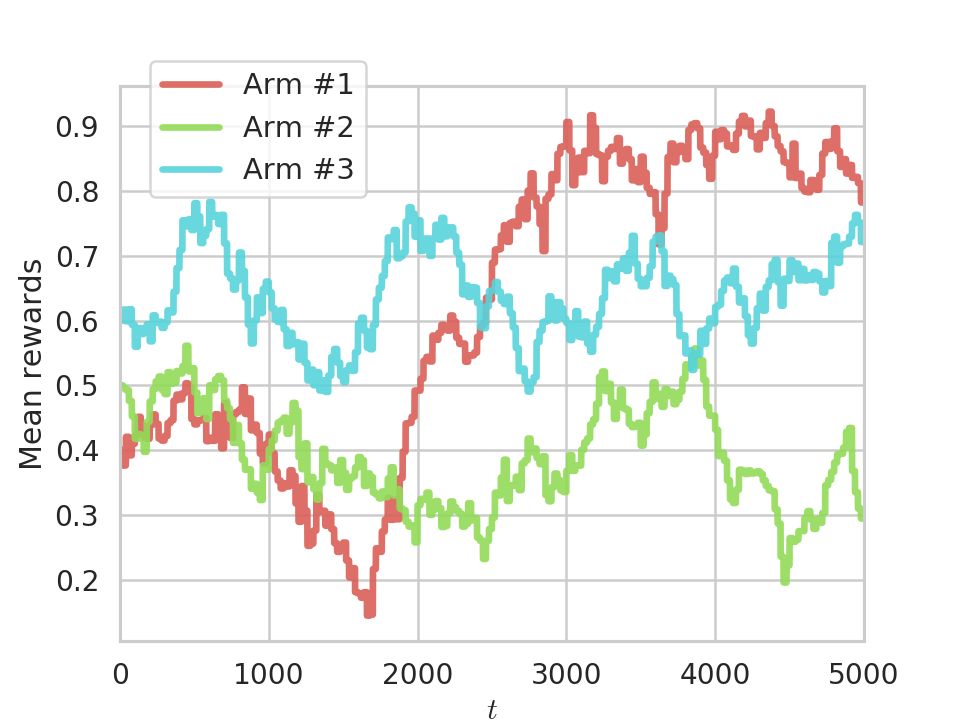}
		\caption{Environment $1$}\label{1rst}
	\end{subfigure}
	\hfill 
	\begin{subfigure}[b]{0.24\textwidth}
		\centering
		\includegraphics[width=\textwidth]{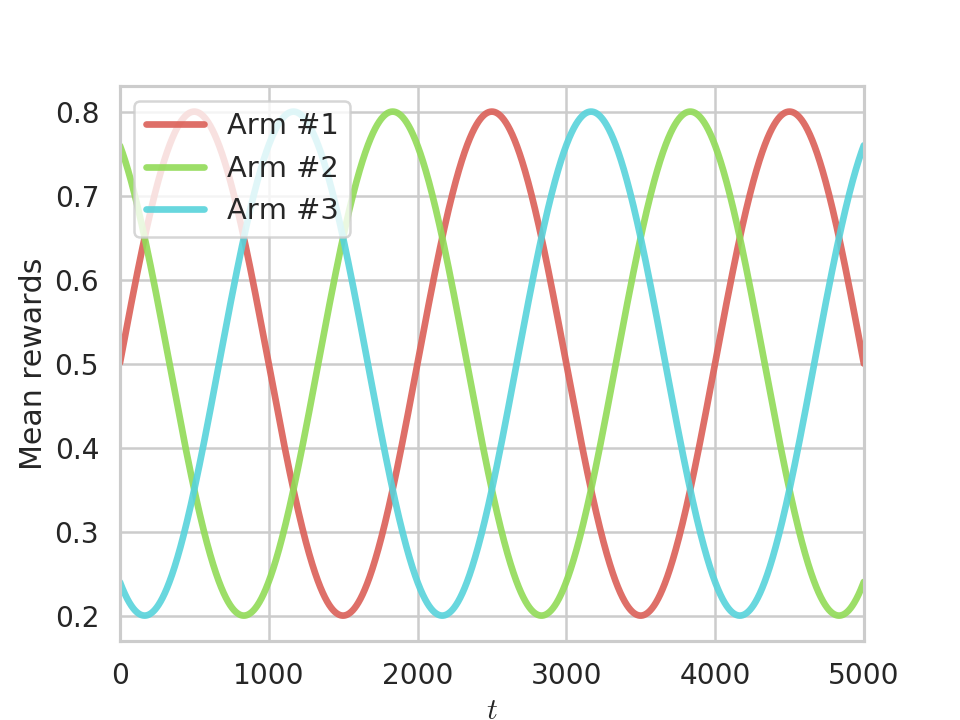}
		\caption{Environment $2$}\label{2nd}
	\end{subfigure}
	\hfill 
	\begin{subfigure}[b]{0.24\textwidth}
		\centering
		\includegraphics[width=\textwidth]{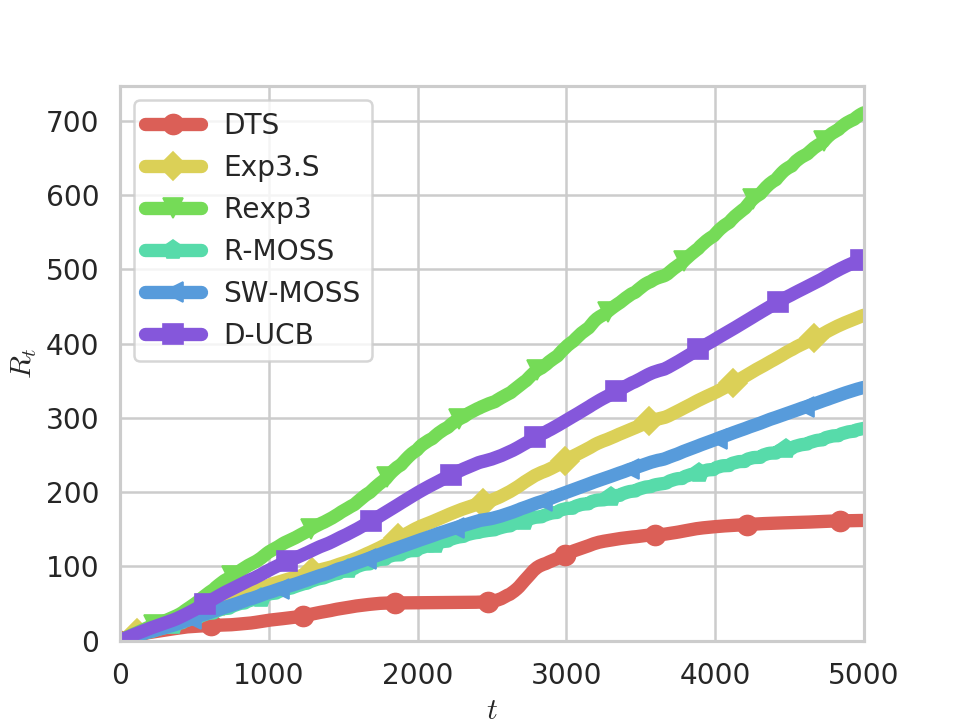}
		\caption{Regrets for environment $1$}\label{reg1}
	\end{subfigure}
	\hfill 
	\begin{subfigure}[b]{0.24\textwidth}
		\centering
		\includegraphics[width=\textwidth]{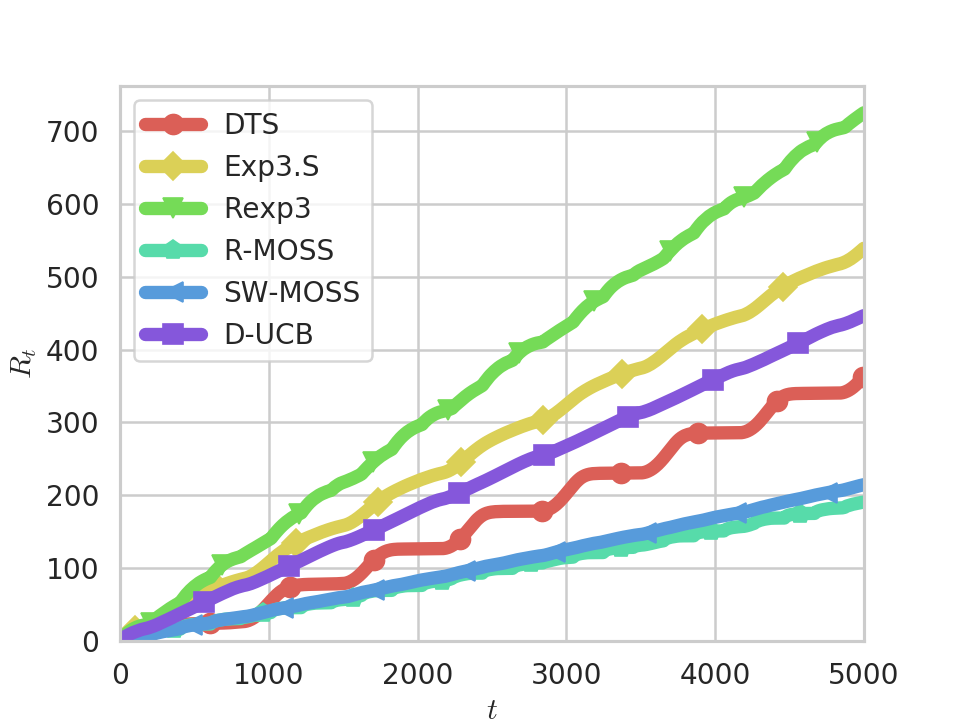}
		\caption{Regrets for environment $2$}\label{reg2}
	\end{subfigure}
	
	\caption{Comparison of different policies.}\label{key}
\end{figure}

The growths of regret in Figs.~\ref{reg1} and~\ref{reg2} show that UCB based policies (R-MOSS, SW-MOSS, and D-UCB) maintain their superior performance against adversarial bandit based policies (Rexp$3$ and Exp$3$.S)  for stochastic bandits even in nonstationary settings, especially for R-MOSS and SW-MOSS. Besides, DTS outperforms other polices when the best arm does not switch. While each switch of the best arm seems to incur larger regret accumulation for DTS, which results in a lager regret compared with SW-MOSS and R-MOSS.

\subsection{Heavy-tailed Nonstationay Stochastic MAB Experiment}
Again we consider the $3$-armed bandit problem with sinusoidal mean rewards. In particular, for each arm $k \in \{1,2,3\}$,
\[\mu_t^k = 0.3 \sin\left(0.001 \pi t + 2k\pi/3\right), \quad  t\in \until{5000}.\]
Thus, the variation budget is $3$. Besides, mean reward is contaminated by additive sampling noise $\nu$, where $\abs{\nu}$ is a generalized Pareto random variable and the sign of $\nu$ has equal probability to be ``$+$" and ``$-$". So the probability distribution for $X_t^k$ is
\[f_t^{k}(x) = \frac{1}{2\sigma}\left(1 + \frac{\xi \abs{x-\mu_t^k}}{\sigma}\right)^{-\frac{1}{\xi} - 1} \, \text{for } x \in (-\infty, +\infty).\]
We select $\xi = 0.4$ and $\sigma=0.23$ such that Assumption \ref{ass: heavy-tailed} is satisfied. We select $a=1.1$ and $\zeta = 2.2$ for both R-RMOSS and SW-RMOSS such that condition $\psi(2\zeta/a) \geq 2a/\zeta$ is met. 

\begin{figure}[ht!]
	\centering
	\begin{subfigure}[b]{0.24\textwidth}
		\centering
		\includegraphics[width=\textwidth]{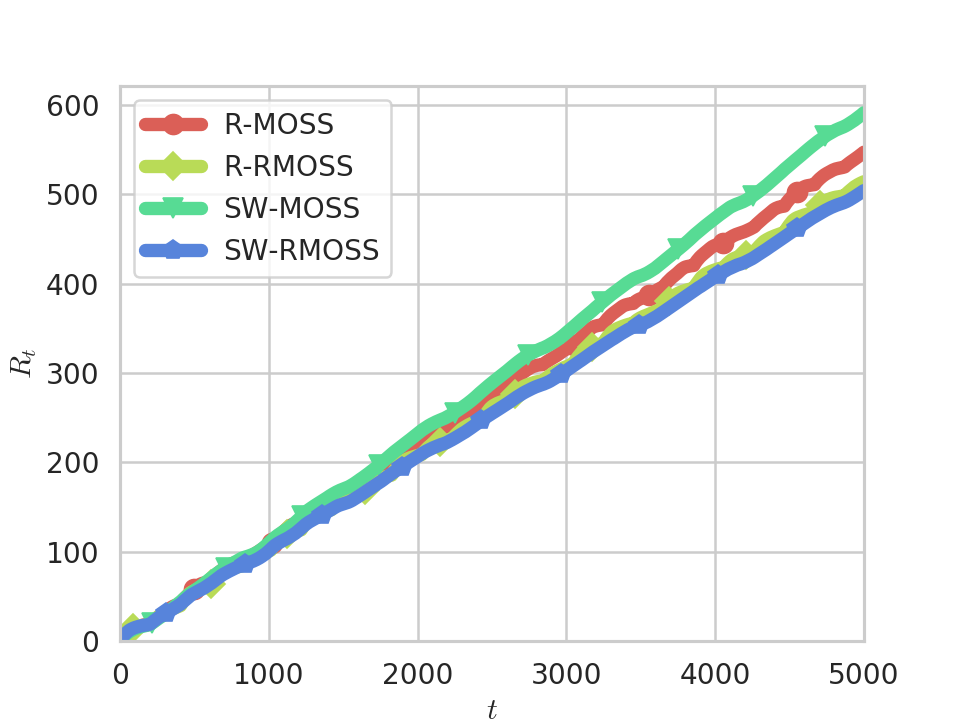}
		\caption{Regret}\label{regret}
	\end{subfigure}
	\hfill 
	\begin{subfigure}[b]{0.24\textwidth}
		\centering
		\includegraphics[width=\textwidth]{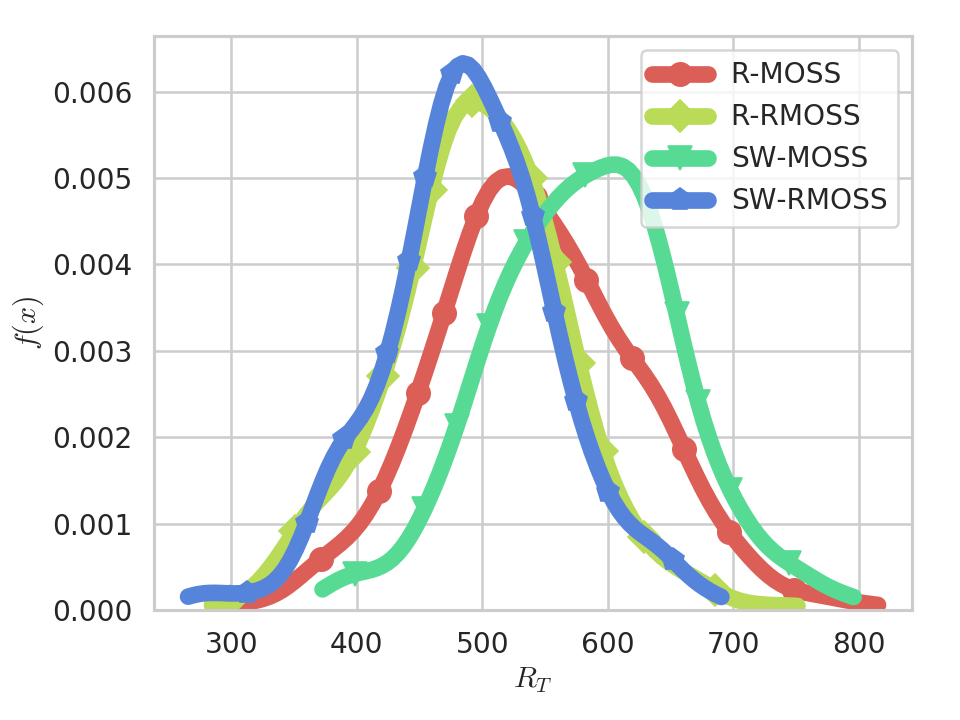}
		\caption{Histogram of $R_T$}\label{histogram}
	\end{subfigure}

	\caption{Performances with heavy-tailed rewards.}
\end{figure}

Fig.~\ref{regret} show RMOSS based polices and slightly outperform MOSS based polices in heavy-tailed settings. While by comparing the estimated histogram of $R_T$ for different policies in Fig.~\ref{histogram}, R-RMOSS and SW-RMOSS have a better consistency and a smaller possibility of a particular realization of the regret deviating significantly from the mean value. 

\section{Conclusion} \label{sec: conclusion}
We studied the general nonstationary stochastic MAB problem with variation budget and provided three UCB based policies for the problem. Our analysis showed that the proposed policies enjoy the worst-case regret that is within a constant factor of the minimax regret lower bound. Besides, the sub-Gaussian assumption on reward distributions is relaxed to define the nonstationary heavy-tailed MAB problem. We show the order optimal worst-case regret can be maintained by extending the previous policies to robust versions.

There are several possible avenues for future research. In this paper, we relied on passive methods to balance the remembering-versus-forgetting tradeoff. The general idea is to keep taking in new information and removing out-dated information. Parameter-free active approaches that adaptively detect and react to environment changes are promising alternatives and may result in better experimental performance. Also extensions from the single decision-maker to distributed multiple decision-makers is of interest. Another possible direction is the nonstaionary version of rested and restless bandits.

\bibliographystyle{IEEEtran}

\bibliography{mybib}

\end{document}